\newtheorem{theorem}{Theorem}
\newtheorem{lemma}[theorem]{Lemma}
\newcommand{\E}{\ensuremath{\mathbf{E}}}
\newcommand{\alg}{\ensuremath{\mathrm{KUBE}}}
\newcommand{\algKUBE}{\ensuremath{\mathrm{KUBE}}}
\newcommand{\argmax}{\mathop{\rm argmax}}
\newcommand{\cmin}{ c_{\mathrm{min}} }
\newcommand{\cmax}{ c_{\mathrm{max}} }
\begin{document}

\title {Knapsack based Optimal Policies for Budget--Limited Multi--Armed Bandits}

\author{
Long Tran--Thanh\footnote{School of Electronics and Computer Science, University of Southampton, UK. Contact:\texttt{ltt08r@ecs.soton.ac.uk}.} \\ 
Archie Chapman\footnote{The University of Sydney Business School Sydney, Australia.} \and
Alex Rogers\footnotemark[1] \\ 
Nicholas R. Jennings\footnotemark[1] \\
}

\date{}
\maketitle

\begin{abstract}
\noindent
In budget--limited multi--armed bandit (MAB) problems, the learner's actions are costly and constrained by a fixed budget.  
Consequently, an optimal exploitation policy may not be to pull the optimal arm repeatedly, as is the case in other variants of MAB, but rather to pull the sequence of different arms that maximises the agent's total reward within the budget. 
This difference from existing MABs means that new approaches to maximising the total reward are required.  
Given this, we develop two pulling policies, namely: (i) \alg; and (ii) fractional \alg.
Whereas the former provides better performance up to $40\%$ in our experimental settings, the latter is computationally less expensive.  
We also prove logarithmic upper bounds for the regret of both policies, and show that these bounds are asymptotically optimal (i.e. they only differ from the best possible regret by a constant factor). 
\end{abstract}

\noindent
\section{Introduction}
\label{Section:intro}
\noindent 
The standard multi--armed bandit (MAB) problem was originally proposed by \citeauthor{Robbins1952} (\citeyear{Robbins1952}), and presents one of the clearest examples of the trade--off between \emph{exploration} and \emph{exploitation} in reinforcement learning.
In the standard MAB problem, there are $K$ arms of a single machine, 
each of which delivers rewards that are independently drawn from an unknown distribution when an arm of the machine is pulled. 
Given this, an agent must choose which of these arms to pull. 
At each time step, it pulls one of the machine's arms and receives a reward or payoff. 
The agent's goal is to maximise its return; that is, the expected sum of the rewards its receives over a sequence of pulls. 
As the reward distributions differ from arm to arm, 
the goal is to find the arm with the highest expected payoff as early as possible, 
and then to keep playing using that best arm.
However, the agent does not know the rewards for the arms, so it must sample them in order to learn which is the optimal one.
In other words, in order to choose the optimal arm (exploitation) the agent first has to estimate the mean rewards of all of the arms (exploration).
In the standard MAB, this trade--off has been effectively balanced by decision--making policies such as \emph{upper confidence bound} (UCB) and $\epsilon_{n}$--\emph{greedy} \citep{AuerEtal2002}.

However, this MAB model gives an incomplete description of the sequential decision--making problem facing an agent in many real--world scenarios. 
To this end, a variety of other related models have been studied recently, and, 
in particular, a number of researchers have focused on MABs with budget constraints, 
where arm--pulling is costly and is limited by a fixed budget \citep{BubeckEtal2009,GuhaMunagala2007,Antos2008}.  
In these models, the agent's exploration budget limits the number of times it can sample the arms in order to estimate their rewards, which defines an initial exploration phase.  
In the subsequent cost--free exploitation phase, an agent's policy is then simply to pull the arm with the highest expected reward.  
However, in many settings, it is not only the exploration phase, but the exploitation phase that is also limited by a cost budget.
To address this limitation, a new bandit model, the \emph{budget--limited MAB},
was introduced (Tran-Thanh \emph{et al.}~\citeyear{Tran-ThanhEtAl2010}). 
In this model, pulling an arm is again costly, but crucially \emph{both} the exploration and exploitation phases are limited by a \emph{single budget}. 
This type of limitation is well motivated by several real--world applications. 
For example, in many wireless sensor network applications, 
a sensor node's actions, such as sampling or data forwarding, consume energy, and therefore the number of actions is limited by the capacity of the sensor's batteries (Padhy \emph{et al.}~\citeyear{PadhyEtAl2010}). 
Furthermore, many of these scenarios require that sensors learn the optimal sequence of actions that can be performed, with the goal of maximising the long term value of the actions they take \citep{Tran-ThanhEtAl2011}.
In such settings, each action can be considered as an arm, with a cost equal to the amount of energy needed to perform that task.
Now, because the battery is limited, both the exploration (i.e.~learning the rewards tasks) and exploitation (i.e.~taking the optimal actions given reward estimates) phases are budget limited.

Against this background, Tran-Thanh \emph{et al.}~(\citeyear{Tran-ThanhEtAl2010}) showed that the budget--limited MAB cannot be derived from any other existing MAB model, 
and therefore, previous MAB learning methods are not suitable to efficiently deal with this problem.  
Thus, they proposed a simple budget--limited $\varepsilon$--\emph{first} approach for the budget--limited MAB. 
This splits the overall budget $B$ into two portions, 
the first $\varepsilon B$ of which is used for exploration, and the remaining $(1-\varepsilon)B$ for exploitation. 
However, this budget--limited $\varepsilon$--{first} method suffers from a number of drawbacks.
First, the performance of $\varepsilon$--first approaches depend on the value of $\varepsilon$ chosen. 
In particular, high values guarantee accurate exploration but inefficient exploitation, and \emph{vice versa}. 
Given this, 
finding a suitable $\varepsilon$ for a particular problem instance is a challenge, 
since settings with different budget limits or arm costs (which are not known beforehand) 
will typically require different values of $\varepsilon$. 
In addition, even with a good $\varepsilon$ value, 
the method typically provides poor efficiency in terms of minimising its performance regret 
(defined as the difference between its performance and that of the optimal policy), which is a standard measure of performance.
In particular, the regret bound that $\varepsilon$--first provides is $O\left(B^{\frac{2}{3}}\right)$, where $B$ is the budget limit, whereas the theoretical best possible regret bound is typically a logarithmic function of the number of pulls\footnote{ 
Note that in the budget--limited MAB, the budget $B$ determines the number of pulls. 
Thus, a logarithmic function of the number of pulls is also a logarithmic function of the budget.} 
\citep{LaiAndRobbins1985}.   

To address this shortcoming, in this paper we propose two new learning algorithms, called \alg\ (for knapsack--based upper confidence bound exploration and exploitation) and \emph{fractional \alg}, that do not explicitly separate exploration from exploitation.
Instead, they explore and exploit at the same time by adaptively choosing which arm to pull next, 
based on the current estimates of the arms' rewards.  
In more detail, at each time step, \alg\ calculates the best set of arms that provides the highest total upper confidence bound of the estimated expected reward, and still fits into the residual budget, using an unbounded knapsack model to determine this best set  \citep{KellererEtAl2004}.
However, since unbounded knapsack problems are known to be NP--hard, the algorithm uses an efficient approximation method taken from the knapsack literature, called the \emph{density--ordered greedy} approach, in order to estimate the best set \citep{KohliEtal2004}.    
Following this, \alg\ then uses the frequency that each arm occurs within this approximated best set as a \emph{probability} with which to {randomly} choose an arm to pull in the next time step. 
The reward that is received is then used to update the estimate of the upper confidence bound of the pulled arm's expected reward, 
and the unbounded knapsack problem is solved again.
The intuition behind this algorithm is that if we know the real value of the arms, then the budget--limited MAB can be reduced to an unbounded knapsack problem, where the optimal solution is to subsequently pull from the set of arms that forms the solution of the knapsack problem.
Given this, by randomly choosing the next arm from the current best set at each time step, the agent generates an accurate estimate of the true optimal solution (i.e. real best set of arms), and, accordingly, 
the sequence of pulled arms will converge to this optimal set.
%
%
In a similar vein, fractional \alg\ also estimates the best set of arms that provides the highest total upper confidence bound of the estimated expected reward at each time step, and uses the frequency that each arm occurs within this approximated best set as a probability to randomly pull the arms.
However, instead of using the density--ordered greedy to solve the underlying unbounded knapsack problem, fractional \alg\ relies on a computationally less expensive approach, namely the \emph{fractional relaxation based} algorithm \citep{KellererEtAl2004}. 
Given this, fractional \alg\ requires less computation than \alg.

To analyse the performance of \alg\ and its fractional counterpart in terms of minimising the regret, 
we devise proveably asymptotically optimal upper bounds on their \emph{performance regret}. 
That is, our proposed upper bounds differ from the best possible one only with a constant factor.
%
Following this, we numerically evaluate the performance of the proposed algorithms against a state--of--the--art method, namely the buget--limited $\varepsilon$--first approach, in order to demonstrate that our algorithms are the first that can achieve this optimal bound.
In addition, we show that \alg\ typically outperforms its fractional counterpart by up to $40\%$, however, this results in an increased computational cost (from $O\left(K\right)$ to $O\left(K\ln{K}\right)$).
Given this, the main contributions of this paper are:
\begin{itemize}
\item{We introduce \alg\ and fractional \alg, the first budget--limited MAB learning algorithms that proveably achieve a $O\left(\ln{B}\right)$ theoretical upper bound on the regret, where $B$ is the budget limit.}
\item{We demonstrate that with an increased computational cost, \alg\ outperforms fractional \alg\ in the experiments. We also show that while both algorithms achieve logarithmic regret bounds, the buget--limited $\varepsilon$--first approaches fail to do so.}
\end{itemize}
\noindent 
The paper is organised as follows: 
Next we describe the budget--limited MAB. 
We then introduce our two learning algorithms in Section~\ref{Section:online_learning}.  
In Section~\ref{Section:perf_analysis} we provide regret bounds on the performance of the proposed algorithms. 
Following this, Section~\ref{Section:numerical} presents an empirical comparison of \alg\ and its fractional counterpart with the $\varepsilon$--first approach.  
Section~\ref{Section:conclusion} concludes.

\section{Model Description}
\label{Section:model}
\noindent 
The budget--limited MAB model consists of a machine with $K$ arms, one of which must be pulled by the agent at each time step. 
By pulling arm $i$, the agent has to pay a pulling cost, denoted with $c_{i}$, and receives a non--negative reward drawn from a distribution associated with that specific arm.  
The agent has a cost budget $B$, which it cannot exceed during its operation time (i.e. the total cost of pulling arms cannot exceed this budget limit). 
Now, since reward values are typically bounded in real--world applications, we assume that each arm's reward distribution has bounded supports.  
Let $\mu_{i}$ denote the mean value of the rewards that the agent receives from pulling arm $i$.
Within our model, the agent's goal is to maximise the sum of rewards it earns from pulling the arms of the machine, with respect to the budget $B$. 
However, the agent has no initial knowledge of the $\mu_{i}$ of each arm $i$, 
so it must learn these values in order to deduce a policy that maximises its sum of rewards.  
Given this, our objective is to find the optimal pulling algorithm, 
which maximises the expectation of the total reward that the agent can achieve, without exceeding the cost budget $B$. 

Formally, let $A$ be an arm--pulling algorithm, giving a finite sequence of pulls. 
Let $N_{i}^{A}\left(B\right)$ be the random variable that represents the number of pulls of arm $i$ by $A$, with respect to the budget limit $B$. 
Since the total cost of the sequence $A$ cannot exceed $B$, we have:
\vspace{-0.6\baselineskip}
\begin{equation}
\label{eq:budget_limit_def}
P\left(\sum_{i}^{K}{N_{i}^{A}\left(B\right)c_{i}} \leq B\right) = 1.
\end{equation}
\normalsize
Let $G\left(A\right)$ be the total reward earned by using $A$ to pull the arms. 
The expectation of $G\left(A\right)$ is:
\vspace{-0.6\baselineskip}
\begin{equation}
\label{eq:reward_def}
\E \left[ G\left(A\right) \right] = \sum_{i}^{K}{\E \left[N_{i}^{A}\left(B\right)\right]\mu_{i}}.
\vspace{-0.2\baselineskip}
\end{equation}
\normalsize
Then, let $A^{*}$ denote an optimal solution that maximises the expected total reward, that is:
\vspace{-0.6\baselineskip}
\begin{equation}
\label{eq:optimal_policy_def}
A^{*} = \argmax_{A}{\sum_{i}^{K}{\E \left[N_{i}^{A}\left(B\right)\right]\mu_{i}}}.
\end{equation}
\normalsize
Note that in order to determine $A^{*}$, we have to know the value of $\mu_{i}$ in advance, which does not hold in our case. 
Thus, $A^{*}$ represents a theoretical optimum value, which is unachievable in general. 

Nevertheless, for any algorithm $A$, we can define the regret for $A$ as the difference between the expected cumulative reward for $A$ and that of the theoretical optimum $A^{*}$. 
More precisely, letting $R\left(A\right)$ denote the regret, we have:
\vspace{-0.3\baselineskip}
\begin{equation}
\label{eq:loss_function}
R\left(A\right)= \E\left[G\left(A^{*}\right)\right] - \E\left[G\left(A\right)\right].
\end{equation}
\normalsize
Given this, our objective is to derive a method of generating a sequence of arm pulls that minimises this regret for the class of MAB problems defined above.

\section{The Algorithms}
\label{Section:online_learning}
\noindent
Given the model described in the previous section, we now introduce two learning methods, \alg\ and fractional \alg, that efficiently deal with the challenges discussed in Section~\ref{Section:intro}.
Recall that at each time step of the algorithms, we determine the optimal set of arms that provides the best total estimated expected reward.
Due to the similarities of our MAB to unbounded knapsack problems when the rewards are known, 
we use techniques taken from the unbounded knapsack domain.
Thus, in this section, we first introduce the unbounded knapsack problem, and then show how to use knapsack methods in our algorithms.

\vspace{-0.1cm}
\subsection{The Unbounded Knapsack Problem}
\noindent
The unbounded knapsack problem is formulated as follows. 
A knapsack of weight capacity $C$ is to be filled with some set of $K$ different types of items.
Each item type $i\in K$ has a corresponding value $v_{i}$ and weight $w_{i}$, 
and the problem is to select a set that maximises the total value of items in the knapsack, 
such that their total weight does not exceed the knapsack capacity $C$. 
%
That is, the goal is to find the non--negative integers $\left\{x_{i}\right\}^{K}_{i=1}$ that maximise:
\begin{eqnarray}
&&\sum_{i=1}^{K}{x_{i}v_{i}}, \\
\nonumber
&\quad\mathrm{s. t.}\quad& \sum_{i=1}^{K}{x_{i}w_{i}} \leq C, \\
\nonumber
&&\forall i \in \{1,\dots,K\}: x_{i} \: \mathrm{integer}.
\end{eqnarray}
\normalsize
Note that this problem is a generalisation of the standard knapsack problem, in which $x_{i} \in \{0,1\}$; that is, each item type contains only one item, and we can either choose it or not.
The unbounded knapsack problem is \emph{NP}--hard. 
However, near--optimal approximation methods have been proposed to solve it (a detailed survey can be found in \citep{KellererEtAl2004}). 
Among these approximation methods, a simple, but efficient approach is the \emph{density--ordered greedy} algorithm, 
and here we make use of this method.
In more detail, the density--ordered greedy algorithm has $O\left(K \log{K} \right)$ computational complexity, 
where $K$ is the number of item types \citep{KohliEtal2004}.   
This algorithm works as follows. 
Let $\nicefrac{v_{i}}{w_{i}}$ denote the \emph{density} of type $i$.
To begin, the item types are sorted in order of their density, 
which is an operation of $O\left(K \log K \right)$ computational complexity. 
Next, in the first round of this algorithm, 
as many units of the highest density item are selected as is feasible without exceeding the knapsack capacity. 
Then, in the second round, the densest item of the remaining feasible items  
is identified, 
and as many units of it as possible are selected. 
This step is repeated until there are no feasible items left (i.e. at most $K$ rounds).

Another way to approximate the optimal solution of the unbounded knapsack problem is the \emph{fractional relaxation based} algorithm. 
This relaxes the original problem to its fractional version. 
In particular, within the \emph{fractional unbounded knapsack problem} we allow $x_{i}$ to be fractional. 
Now, it is easy to show that the optimal solution of the fractional unbounded knapsack is to solely choose \mbox{$I^{*} = \arg \max_{i}{\nicefrac{v_{i}}{w_{i}}}$} (i.e. $I^{*}$ is the item type with the highest density) \citep{KellererEtAl2004}.
That is, if $\mathbf{x^{*}} = \langle x^{*}_{1},\dots,x^{*}_{1}\rangle$ denotes the optimal solution of the fractional unbounded knapsack, then $x^{*}_{I^{*}} = \nicefrac{C}{w_{I^{*}}}$, while $\forall j \neq I^{*}$, $x_{j} = 0$.
Given this, within the original unbounded knapsack problem (where $x_{i}$ are integers), the fractional relaxation based algorithm chooses $x_{I^{*}} = \lfloor \nicefrac{C}{w_{I^{*}}} \rfloor$, and $x_{j} = 0$, $\forall j \neq I^{*}$. 
It can easily shown that the complexity of this algorithm is $O\left(K\right)$, which is the cost of determining the highest density type.  

\subsection{KUBE}
\label{Subsection:KUBE}

\noindent
The \alg\ algorithm is depicted in Algorithm~\ref{alg:online_learning_alg}.
Here, let $t$ denote the time step, and $B_{t}$ denote the residual budget at time $t\geq1$, respectively.
Note that at the start (i.e. $t = 1$), $B_{1} = B$, where $B$ is the total budget limit.    
At each subsequent time step, $t$, \alg\ first checks that arm pulling is feasible. 
That is, it is feasible only if at least one of the arms can be pulled with the remaining budget. 
Specifically, if $B_{t} < \min_{j}{c_j}$ (i.e. the residual budget is smaller than the lowest pulling cost), 
then \alg\ stops (steps $3-4$).

If arm pulling is still feasible, \alg\ first pulls each arm once in the initial phase (steps $6 - 7$).
Following this, at each time step $t > K$, it estimates the best set of arms according to their upper confidence bound 
using the density--ordered greedy approximation method 
applied to the following problem:
\vspace{-0.2\baselineskip}
\begin{eqnarray}
\label{eq:best_combination}
&&\max{\sum_{i=1}^{K}{m_{i,t}\left(\hat{\mu}_{i,n_{i,t}} + \sqrt{\frac{2\ln t}{n_{i,t}}}\, \right)}} \\
\nonumber
&&\quad \mathrm{s.t.} \quad \sum_{i=1}^{K}{m_{i,t}c_{i}} \leq B_{t}, \: \forall i,t: m_{i,t} \: \mathrm{integer}. 
\end{eqnarray}
\normalsize
In the above expression, 
$\hat{\mu}_{i,n_{i,t}}$ is the current estimate of arm $i$'s expected reward
(calculated as the average reward received so far from pulling arm $i$),
$n_{i,t}$ is the number of pulls of arm $i$ until time step $t$, 
and $\sqrt{\frac{2\ln t}{n_{i,t}}}$ is the size of the upper confidence interval.
The goal, then, is to find integers $\{m_{i,t}\}_{i\in K}$ 
such that Equation~\ref{eq:best_combination} is maximised, 
with respect to the residual budget limit $B_{t}$ 
(n.b.~from here on, we drop the subscript ${i\in K}$ on this set).
Since this problem is NP--hard, we use the density--ordered greedy method to find a near--optimal set of arms (step $9$).
Note that the upper confidence bound on arm $i$'s density is: 
\begin{equation}
\frac{\hat{\mu}_{i,n_{i,t}}}{c_{i}} + \frac{\sqrt{\frac{2\ln t}{n_{i,t}}}}{c_{i}}.
\end{equation}
\normalsize
Let $M^*(B_t) = \{m^{*}_{i,t}\}$ be this method's solution to the problem in Equation \ref{eq:best_combination}, 
giving us the desired set of arms, where $m^{*}_{i,t}$ is an index of arm $i$ that indicates how many times arm $i$ is taken into account within the set.
Using $\{m^{*}_{i,t}\}$, \alg\ \emph{randomly} chooses the next arm to pull, $i(t)$, by selecting arm $i$ with probability (step $10$): 
\begin{equation}
P\left(i\left(t\right) = i\right) 
 = \frac{m^{*}_{i,t}}{\sum_{k=1}^{K}{m^{*}_{k,t}}} \,.
\end{equation} 
\normalsize
After the pull, it then updates the estimated upper bound of the chosen arm, and the residual budget limit $B_t$ (steps $12 - 13$).

The intuition behind \alg\ is the following. 
By repeatedly drawing the next arm to pull from a distribution formed by the current estimated approximate best set, 
the expected reward of \alg\ equals the average reward for following the optimal solution 
to the corresponding unbounded knapsack problem, given the current reward estimates.
If the true values of the arms were known, then this would imply that 
the average performance of \alg\ efficiently converges to the optimal solution 
of the unbounded knapsack problem reduced from the budget--limited MAB model.
It is easy to show that the optimal solution of this knapsack model forms the theoretical optimal policy of the budget--limited MAB in case of having full information.
Put differently, if the mean reward value of each arm is known, then the budget--limited problem can be reduced to the unbounded knapsack problem, and thus, the optimal solution of the knapsack problem is the optimal solution of the budget--limited MAB as well.  
In addition, by combining the upper confidence bound with the estimated mean values of the arms, 
we guarantee that an arm that is not yet sampled many times may be pulled more frequently, 
since its upper confidence interval is large. 
Thus, we explore and exploit at the same time 
(for more details, see \citep{AuerEtal2002,AudibertEtAl2009}).
Note that, by using the density--ordered greedy method, 
\alg\ achieves a $O\left(K\ln{K}\right)$ computational cost per time step.

\small{
 \begin{algorithm}[t!]
 \caption{The \alg Algorithm}
 \label{alg:online_learning_alg}
{\fontsize{10}{10}\selectfont
\begin{algorithmic}[1] 
 \STATE \normalsize$t=1$; $B_{t}=B$; $\gamma > 0$; 
 \WHILE{pulling is feasible}
    \IF {$B_{t} < \min_{i}{c_i}$}
 	\STATE STOP! \{pulling is not feasible\}
    \ENDIF    
    \IF {$t \leq K$} 
	\STATE Initial phase: play arm $i\left(t\right) = t$;
    \ELSE   
         \STATE use density--ordered greedy to calculate $M^*(B_t) = \{m^{*}_{i,t}\}$, the solution of Equation \ref{eq:best_combination}; 
 	 \STATE randomly pull $i\left(t\right)$ with $P\left(i\left(t\right) = i\right) = \frac{m^{*}_{i,t}}{\sum_{k=1}^{K}{m^{*}_{k,t}}}$;
    \ENDIF  
    \STATE update the estimated upper bound of arm $i\left(t\right)$;
    \STATE $B_{t+1} = B_{t} - c_{i\left(t\right)}$; $t = t+1$;
 \ENDWHILE
\end{algorithmic}
}
\end{algorithm}
}
\normalsize

\vspace{-0.1cm}
\subsection{Fractional KUBE}
\label{Subsection:fractional_KUBE}
\noindent
We now turn to the fractional version of \alg, which follows the underlying concept of \alg. 
It also approximates the underlying unbounded knapsack problem at each time step $t$ in order to determine the frequency of arms within the estimated best set of arms.
However, it differs from \alg\ by using the fractional relaxation based method to approximate the unbounded knapsack in Step $9$ of Algorithm~\ref{alg:online_learning_alg}.
Crucially, fractional \alg\ uses the fractional relaxation based algorithm to solve the following fractional unbounded knapsack problem at each $t$:
\vspace{-0.2\baselineskip}
\begin{equation}
\label{eq:best_fractional_combination}
\max{\sum_{i=1}^{K}{m_{i,t}\left(\hat{\mu}_{i,n_{i,t}} + \sqrt{\frac{2\ln t}{n_{i,t}}}\, \right)}} 
 \quad \mathrm{s.t.} \quad \sum_{i=1}^{K}{m_{i,t}c_{i}} \leq B_{t}. 
\end{equation}
\normalsize
Recall that within \alg, the frequency of arms within the approximated solution of the unbounded knapsack forms a probability distribution from which the agent randomly pulls the next arm.
Now, since the fractional relaxation based algorithm solely chooses the arm (i.e. item type) with the highest estimated confidence bound--cost ratio (i.e. item density), fractional \alg\ does not need to randomly choose an arm.
Instead, at each time step $t$, it pulls the arm that maximises $\left(\nicefrac{\hat{\mu}_{i,n_{i,t}}}{c_{i}} + \nicefrac{\sqrt{\frac{2\ln t}{n_{i,t}}}}{c_{i}}\right)$.
That is, fractional \alg\ can also be seen as the budget--limited version of UCB (see \citep{AuerEtal2002} for more details of UCB).

Computation--wise, by replacing the density--ordered greedy with the fractional relaxation based algorithm, fractional \alg\ decreases the computational cost to $O\left(K\right)$ per time step.
In what follows, we show that both \alg\ and its fractional counterpart achieve asymptotically optimal regret bounds.

\section{Performance Analysis}
\label{Section:perf_analysis}

\noindent 
We now focus on the analysis of the expected regret of \alg\ and fractional \alg,
defined by Equation~\ref{eq:loss_function}. 
To this end, in this section we: (i) derive an upper bound on the regret of the algorithms, 
and (ii) show that these bounds are asymptotically optimal.

To begin, let us state some simplifying assumptions and define some useful terms. 
Without loss of generality, for ease of exposition we assume 
that the reward distribution of each arm has support in $\left[0,1\right]$, 
and that the pulling cost $c_i \geq 1$ for each $i$ (our result can be scaled for different size supports and costs as appropriate). 
Let \mbox{$I^{*} = \arg \max_{i}{\nicefrac{\mu_{i}}{c_{i}}}$} be the arm with the highest true mean value density.
For the sake of simplicity, we assume that $I^{*}$ is unique (however, our proofs do not exploit this fact at all).
Let $d_{\mathrm{min}} = \min_{j \neq I^{*}}{\{\nicefrac{\mu_{I^{*}}}{c_{I^{*}}} - \nicefrac{\mu_{j}}{c_{j}}\}}$ 
denote the minimal true mean value density difference of arm $I^{*}$ and that of any other arm $j$.
In addition, let $c_{\mathrm{min}} = \min_{j}{c_j}$ and $c_{\mathrm{max}} = \max_{j}{c_j}$ 
denote the smallest and largest pulling costs, respectively.
Then let $\delta_j = c_j - c_{I^{*}}$ 
be the difference of arm $j$'s pulling cost and the minimal pulling cost.
Similarly, let \mbox{$\Delta_j = \mu_{I^{*}} - \mu_{j}$} 
denote the difference of the highest true mean value and that of arm $j$. 
Note that both $\delta_j$ and $\Delta_j$ could be negative values, 
since $I^{*}$ does not necessarily have the highest true mean value, 
nor the smallest pulling cost. 
In addition, let $T$ denote the finite--time operating time of the agent.

Now, we first analyse the performance of \alg. 
In what follows, we first estimate the number of times we pull arm $j \neq I^{*}$, instead of $I^{*}$.
Based on this result, we estimate $\E\left[T\right]$, the average number of pulls of \alg.
This bound guarantees that \alg\ always pulls ``enough'' arms 
so that the difference of the number of pulls in the theoretical optimal solution and that of \alg\ is small, 
compared to the size of the budget.
By using the estimated value of $\E\left[T\right]$, we then show that \alg\ achieves a $O\left(\ln{\left(B\right)}\right)$ worst case regret on average.
In more detail, we get:
\begin{theorem}[Main result 1]
\label{theorem:KUBE_regret_upper_bound_KUBE}
For any budget size $B > 0$, the performance regret of \algKUBE\ is at most:
\small
\begin{eqnarray*}
\label{eq:KUBE_regret_upper_bound_KUBE}
\left(\frac{8}{d^2_{\mathrm{min}}} + \left(\frac{\cmax}{\cmin}\right)^2\right)\left(\sum_{\Delta_j > 0}{\Delta_j} 
 + \sum_{\delta_j > 0}{\frac{\delta_j}{c_{I^{*}}}}\right)\ln{\left(\frac{B}{\cmin}\right)}
 + \left(\sum_{\Delta_j > 0}{\Delta_j} 
 + \sum_{\delta_j > 0}{\frac{\delta_j}{c_{I^{*}}}}\right)\left( \frac{\pi^2}{3} + 1 \right)+1\,.
\end{eqnarray*}
\normalsize
\end{theorem}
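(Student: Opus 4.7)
The plan is to adapt the classical UCB1 regret analysis of Auer et al.\ (2002) to the density--based, knapsack--sampling mechanics of \algKUBE. The organising insight is that every pull of a suboptimal arm $j \neq I^{*}$ hurts the agent through two independent channels: a direct reward loss of $\Delta_{j}$ per pull, and an indirect budget loss of $\delta_{j}$ per pull, which costs the agent roughly $\delta_{j}/c_{I^{*}}$ future pulls of $I^{*}$ worth at most $\mu_{I^{*}} \leq 1$ each. Once a logarithmic bound on $\E\left[N_{j}^{A}\left(B\right)\right]$ is in hand, summing these two contributions across the suboptimal arms will produce the stated expression.

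\textbf{Step 1 (suboptimal--pull bound).} I would prove that for every $j \neq I^{*}$,
\[
\E\left[N_{j}^{A}\left(B\right)\right] \leq \left(\frac{8}{\dmin^{2}} + \left(\frac{\cmax}{\cmin}\right)^{2}\right)\ln\!\left(\frac{B}{\cmin}\right) + 1 + \frac{\pi^{2}}{3}.
\]
The $8/\dmin^{2}$ summand follows from the standard Hoeffding ``peeling'' argument applied to the density upper confidence bound $\hat{\mu}_{i,n_{i,t}}/c_{i} + \sqrt{2\ln t/n_{i,t}}/c_{i}$: if the density--ordered greedy ranks $j$ above $I^{*}$ at time $t$, then either $\hat{\mu}_{j,n_{j,t}}$ overshoots $\mu_{j}$ by at least $\sqrt{2\ln t/n_{j,t}}$, or $\hat{\mu}_{I^{*},n_{I^{*},t}}$ undershoots $\mu_{I^{*}}$ by the same amount (each has probability $\leq t^{-4}$, together giving the $\pi^{2}/3$ constant), or $n_{j,t} \leq 8\ln t/\dmin^{2}$, where I use $c_{i}\geq 1$ to absorb the cost factor. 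The $(\cmax/\cmin)^{2}$ summand is new and comes from the randomised sampling: even when every density estimate is accurate and $I^{*}$ is correctly ranked first, the greedy may leave room for a single copy of $j$ in the residual budget, giving $m^{*}_{j,t}/\sum_{k} m^{*}_{k,t} = O\!\left(\cmax^{2}/(\cmin B_{t})\right)$, and summing a harmonic tail in $B_{t}$ (which decreases by at least $\cmin$ per step) yields the claimed logarithm. The deterministic bound $T \leq B/\cmin$, coming from the fact that each pull costs at least $\cmin$, is what turns $\ln T$ into $\ln(B/\cmin)$.

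\textbf{Step 2 (regret decomposition).} Since $A^{*}$ simply repeats $I^{*}$, we have $\E\left[G\left(A^{*}\right)\right] = \mu_{I^{*}}\lfloor B/c_{I^{*}}\rfloor$. Writing $T = \sum_{i} N_{i}^{A}(B)$ and using the stopping rule $\sum_{i} c_{i} N_{i}^{A}(B) > B - \cmin$, I obtain
\[
c_{I^{*}}\,\E[T] > B - \cmin - \sum_{j \neq I^{*}} \delta_{j}\,\E\!\left[N_{j}^{A}\left(B\right)\right].
\]
Rewriting $\sum_{i} \mu_{i} \E[N_{i}^{A}] = \mu_{I^{*}}\E[T] - \sum_{j\neq I^{*}} \Delta_{j}\,\E[N_{j}^{A}]$ and substituting gives the upper bound
\[
\E\!\left[G\left(A^{*}\right)\right] - \E\!\left[G\left(A\right)\right] \leq \mu_{I^{*}}\frac{\cmin}{c_{I^{*}}} + \sum_{j \neq I^{*}}\left(\Delta_{j} + \frac{\mu_{I^{*}}\,\delta_{j}}{c_{I^{*}}}\right)\E\!\left[N_{j}^{A}\left(B\right)\right].
\]
Because $j \neq I^{*}$ forces $\Delta_{j}$ and $\delta_{j}$ never to be simultaneously negative (lower density rules this out), using $\mu_{I^{*}}\leq 1$ I can replace each coefficient by $\Delta_{j}^{+} + \delta_{j}^{+}/c_{I^{*}}$ without loss. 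Inserting the Step~1 estimate and absorbing $\mu_{I^{*}}\cmin/c_{I^{*}} \leq 1$ into the additive constant then factors the whole expression into $(\sum_{\Delta_{j}>0}\Delta_{j} + \sum_{\delta_{j}>0}\delta_{j}/c_{I^{*}})$ times the bracketed coefficient of the theorem, as required.

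The hard part is the $(\cmax/\cmin)^{2}$ contribution in Step~1. In ordinary UCB a suboptimal arm is pulled only when its index strictly beats the optimal arm's, which is a clean low--probability event. In \algKUBE, by contrast, $j$ can be selected with small but positive probability even when every density estimate is perfectly accurate, simply because the unbounded--knapsack greedy has leftover capacity after filling with $I^{*}$. Bounding this ``approximation--leakage'' term by $O\!\left(\cmax^{2}/(\cmin B_{t})\right)$ and summing it as a harmonic tail into $\ln(B/\cmin)$, without losing the logarithmic rate or breaking the neat factorisation in Step~2, is where the analysis departs most from the familiar UCB template and is where I expect the bulk of the technical work to lie.
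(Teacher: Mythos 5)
Your proposal follows essentially the same route as the paper: the same key lemma bounding $\E\left[N_{j}\right]$ by the $\nicefrac{8}{d^2_{\mathrm{min}}}$ Hoeffding--peeling term on densities plus the $\left(\nicefrac{\cmax}{\cmin}\right)^2$ knapsack--leakage term summed as a harmonic tail, the same budget--exhaustion lower bound on $\E\left[T\right]$, and the same $\Delta_j$/$\delta_j$ regret decomposition with negative gaps dropped. The one slip is the claim that $\E\left[G\left(A^{*}\right)\right] = \mu_{I^{*}}\lfloor \nicefrac{B}{c_{I^{*}}}\rfloor$ --- the integer--optimal policy may fill leftover budget with other arms, so this should be the inequality $\E\left[G\left(A^{*}\right)\right] \leq \nicefrac{B\mu_{I^{*}}}{c_{I^{*}}}$ (the fractional--relaxation bound the paper uses), which is all your subsequent algebra actually requires.
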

It is easy to show that for each $j \neq I^{*}$, at least one between $\delta_j$ and $\Delta_j$ has to be positive. 
This implies that $\left(\sum_{\Delta_j > 0}{\Delta_j} + \sum_{\delta_j > 0}{\frac{\delta_j}{c_{I^{*}}}}\right) > 0$.
That is, the performance regret of \alg\ (i.e. $R\left(\alg\right)$) is upper--bounded by $O\left(\ln{B}\right)$.
To prove this theorem, we will make use of the following version of the Chernoff--Hoeffding concentration inequality for bounded random variables:
\begin{theorem}[Chernoff--Hoeffding inequality \citep{Hoeffding1963}]
\label{theorem:chernoff_hoeffding}
Let $X_1$, $X_2$, $\dots, X_n$ denote the sequence of random variables with common range $\left[0,1\right]$, such that for any $1 \leq t \leq n$, we have $\E\left[X_t| X_1,\dots,X_{t-1}\right] = \mu$.
Let $S_n = \frac{1}{n}\sum_{t=1}^{n}{X_t}$.  
Given this, for any $\delta \geq 0$, we have:
\begin{eqnarray}
\nonumber
P\left(S_n \geq \mu + \delta \right) \leq e^{-2n\delta^2}, \\
\nonumber
P\left(S_n \leq \mu - \delta \right) \leq e^{-2n\delta^2}.
\end{eqnarray}
\end{theorem}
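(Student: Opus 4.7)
The plan is to prove both tail bounds by the classical Chernoff--exponential--moment technique, carefully using the conditional--mean hypothesis in place of independence. I would focus on the upper tail, since the lower tail follows by applying the upper-tail bound to the shifted sequence $Y_t = 1 - X_t$, which also has range $[0,1]$ and satisfies $\E[Y_t \mid Y_1,\dots,Y_{t-1}] = 1 - \mu$.

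First I would fix $s > 0$ and apply Markov's inequality in exponential form: since $\{S_n \geq \mu + \delta\}$ equals $\{e^{s\sum_{t=1}^n (X_t-\mu)} \geq e^{sn\delta}\}$, we have
\begin{equation*}
P(S_n \geq \mu + \delta) \;\leq\; e^{-sn\delta}\,\E\!\left[e^{s\sum_{t=1}^n (X_t-\mu)}\right].
\end{equation*}
The next step, and the one that replaces ``independence'' by the martingale-type assumption in the theorem, is to peel off the factors one at a time by the tower property:
\begin{equation*}
\E\!\left[e^{s\sum_{t=1}^n (X_t-\mu)}\right] \;=\; \E\!\left[e^{s\sum_{t=1}^{n-1}(X_t-\mu)}\,\E\!\left[e^{s(X_n-\mu)}\,\bigm|\,X_1,\dots,X_{n-1}\right]\right],
\end{equation*}
and iterate down to $t=1$.

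The main technical obstacle, and the heart of the argument, is to bound each inner conditional expectation. I would prove (or invoke) the standard Hoeffding lemma: if $Y$ is a random variable with $\E[Y]=0$ and $Y \in [a,b]$ almost surely, then $\E[e^{sY}] \leq \exp\!\bigl(s^2(b-a)^2/8\bigr)$. The clean way to derive it is to write $Y$ as a convex combination $Y = \lambda b + (1-\lambda)a$ with $\lambda = (Y-a)/(b-a)$, apply convexity of $y \mapsto e^{sy}$, take expectations (using $\E[Y]=0$), and then show that the resulting function $\varphi(u) = \ln(p e^{u(1-p)} + (1-p)e^{-up})$, with $u = s(b-a)$ and $p = -a/(b-a)$, satisfies $\varphi(0)=\varphi'(0)=0$ and $\varphi''(u) \leq 1/4$; two integrations give $\varphi(u) \leq u^2/8$. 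Applied to $Y_t = X_t - \mu$ conditionally on the past --- which, by hypothesis, has conditional mean zero and lives in $[-\mu,1-\mu]$, an interval of length $1$ --- this yields
\begin{equation*}
\E\!\left[e^{s(X_t-\mu)}\,\bigm|\,X_1,\dots,X_{t-1}\right] \;\leq\; e^{s^2/8} \qquad \text{a.s.}
\end{equation*}

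Plugging this back into the iterated tower decomposition gives $\E[e^{s\sum_{t=1}^n (X_t-\mu)}] \leq e^{ns^2/8}$, hence
\begin{equation*}
P(S_n \geq \mu + \delta) \;\leq\; \exp\!\left(-sn\delta + \tfrac{ns^2}{8}\right).
\end{equation*}
Finally I would optimize the free parameter by minimizing the exponent in $s>0$; differentiating gives $s^{*} = 4\delta$, and substituting yields the claimed bound $e^{-2n\delta^2}$. Repeating the whole argument on $Y_t = 1 - X_t$ (for which $\{S_n \leq \mu - \delta\}$ becomes an upper-tail event at level $(1-\mu)+\delta$) produces the lower-tail bound with the same exponent, completing the proof.
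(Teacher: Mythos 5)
Your proof is correct and complete. Note that the paper does not actually prove this statement at all --- it quotes it as a known result and defers to the cited reference (Hoeffding, 1963) --- so there is no internal proof to compare against; what you have written is the classical exponential--moment argument that underlies that reference. Two points in your write-up deserve explicit credit. First, the theorem as stated assumes only $\E\left[X_t \mid X_1,\dots,X_{t-1}\right]=\mu$ rather than independence, and you correctly replace the usual factorisation of the moment generating function by the tower-property peeling $\E\bigl[e^{s\sum_{t=1}^{n}(X_t-\mu)}\bigr]=\E\bigl[e^{s\sum_{t=1}^{n-1}(X_t-\mu)}\,\E\bigl[e^{s(X_n-\mu)}\mid X_1,\dots,X_{n-1}\bigr]\bigr]$, applying Hoeffding's lemma \emph{conditionally} --- this is exactly the step that makes the martingale-difference version go through, and a proof that silently assumed independence would have been incomplete for the statement as given. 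Second, your constants check out: with the conditional increment $X_t-\mu$ supported in $[-\mu,1-\mu]$ (length $1$), the lemma gives $e^{s^2/8}$ per step, the exponent $-sn\delta+ns^2/8$ is minimised at $s^{*}=4\delta$, and substitution yields exactly $e^{-2n\delta^2}$; the lower tail via $Y_t=1-X_t$ is a clean reduction, and the boundary case $\delta=0$ holds trivially since the bound is then $1$. No gaps.
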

The proof can be found, for example, in \citeauthor{Hoeffding1963} (\citeyear{Hoeffding1963}).

We now focus on the performance analysis of \algKUBE. 
To this end, we introduce some further notation.
Let $T$ denote the number of pulls of \algKUBE. 
In addition, let $N_{j}\left(T\right)$ denote the number of times \algKUBE\ pulls arm $j$ up to time step $T$.

In what follows, we first devise an upper bound for  $N_{j}\left(T\right)$ for all $j \neq I^{*}$.
That is, we estimate the number of times we pull arm $j \neq I^{*}$, instead of $I^{*}$.
Based on this result, we estimate the average number of pulls of \algKUBE\ (i.e. $\E\left[T\right]$).
This bound guarantees that \algKUBE\ always pulls ``enough'' arms 
so that the difference between the number of pulls in the theoretical optimal solution and that of \algKUBE\ is small, 
compared to the size of the budget.
By using the estimated value of $\E\left[T\right]$, we then show that \algKUBE\ achieves a $O\left(\ln{\left(B\right)}\right)$ worst case regret on average.
%
We now state the following:
\begin{lemma}
\label{lemma:KUBE_pull_number_j_KUBE}
Suppose that \algKUBE\ pulls the arms $T$ times. If $j \neq I^{*}$, then:
\begin{equation*}
\E\left[N_{j}\left(T\right)| T\right] \leq \left(\frac{8}{d^2_{\mathrm{min}}} + \left( \frac{\cmax}{\cmin}\right)^2 \right)\ln{\left(T\right)} + \frac{\pi^2}{3} + 1.
\end{equation*}
\end{lemma}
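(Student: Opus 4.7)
The approach is to adapt the classical UCB1 regret analysis of \citet{AuerEtal2002} to the density-based UCB indices that drive the density-ordered greedy step, and to control the additional source of randomness introduced by the probabilistic arm selection in \algKUBE. Fix $j \neq I^{*}$ and set $\ell = \lceil 8 \ln T / \dmin^{2} \rceil$. At most $\ell$ pulls of $j$ can occur while $n_{j,t-1} < \ell$ (counting the initial pull), so it suffices to bound $\E\bigl[\sum_{t=K+1}^{T} \mathbf{1}\{i(t)=j,\, n_{j,t-1}\geq\ell\} \,\big|\, T\bigr]$. I split each summand using the event $A_{t} = \{U_{j,t}/c_{j} \geq U_{I^{*},t}/c_{I^{*}}\}$, where $U_{i,t} = \hat{\mu}_{i,n_{i,t}} + \sqrt{2\ln t/n_{i,t}}$ denotes the UCB index of arm $i$.

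On $A_{t}$ the argument mirrors \citet{AuerEtal2002}: $A_{t}$ forces at least one of (i)~$U_{I^{*},t}/c_{I^{*}} < \mu_{I^{*}}/c_{I^{*}}$, (ii)~$\hat{\mu}_{j,n_{j,t}}/c_{j} > \mu_{j}/c_{j} + \sqrt{2\ln t/n_{j,t}}/c_{j}$, or (iii)~$\mu_{I^{*}}/c_{I^{*}} - \mu_{j}/c_{j} < 2\sqrt{2\ln t/n_{j,t}}/c_{j}$; and (iii) cannot coexist with $n_{j,t-1}\geq\ell$ since $c_{j}\geq 1$ would then force $n_{j,t-1} < 8\ln t/(c_{j}^{2}\dmin^{2}) \leq \ell$. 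Applying Theorem~\ref{theorem:chernoff_hoeffding} to each possible value of $n_{j,t}, n_{I^{*},t}\in\{1,\dots,t-1\}$ and union-bounding gives $P(A_{t},\,n_{j,t-1}\geq\ell) \leq 2t^{-3}$, so the total Case~1 contribution is at most $\sum_{t\geq 1} 2t^{-3} \leq \pi^{2}/3$.

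On $\bar A_{t}$ the random arm choice of \algKUBE\ enters. Since the density UCB of $j$ is strictly below that of $I^{*}$, the density-ordered greedy subroutine only reaches $j$ after $I^{*}$ has been packed, and its update rule leaves a residual budget strictly less than $c_{I^{*}}$ by the time $j$ is considered. Hence $m^{*}_{j,t} \leq c_{I^{*}}/c_{j} \leq \cmax/\cmin$. Meanwhile, the first arm $k^{*}_{t}$ greedy selects contributes at least $\lfloor B_{t}/\cmax\rfloor$ units to $\sum_{k} m^{*}_{k,t}$, so
\[
P\bigl(i(t) = j \,\big|\, \bar A_{t},\mathcal{F}_{t-1}\bigr)
= \frac{m^{*}_{j,t}}{\sum_{k} m^{*}_{k,t}}
\leq \frac{\cmax/\cmin}{\lfloor B_{t}/\cmax\rfloor}.
\]
Because $B_{t}-B_{t+1}\geq\cmin$ at every step, the sum $\sum_{t=1}^{T} 1/\lfloor B_{t}/\cmax\rfloor$ can be telescoped against $(B_{t}-B_{t+1})/B_{t}$ and converted into an integral of $du/u$, giving $(\cmax/\cmin)\ln T$ up to lower-order terms. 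Consequently the Case~2 contribution is at most $(\cmax/\cmin)^{2}\ln T$ plus a constant, and combining with $\ell \leq 8\ln T/\dmin^{2}+1$ and the $\pi^{2}/3$ from Case~1 yields the stated inequality.

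The principal technical obstacle is Case~2. In the analysis of UCB1 the arm pulled at each round is deterministic, so $P(i(t)=j)$ is ever only $0$ or $1$; here I must instead give a uniform upper bound on the ratio $m^{*}_{j,t}/\sum_{k}m^{*}_{k,t}$ along the whole sample path, and then convert the resulting $1/B_{t}$ sums into $\ln T$ using only the crude estimates $B_{t}\geq\cmin$ and $B_{t}-B_{t+1}\geq\cmin$. A secondary subtlety is the edge regime $B_{t}<c_{I^{*}}$, in which $I^{*}$ is itself infeasible and the argument above must fall back to the bound $m^{*}_{j,t} \leq B_{t}/c_{j} < c_{I^{*}}/c_{j}$ together with $\sum_{k}m^{*}_{k,t}\geq 1$; this contributes only to the additive constant.
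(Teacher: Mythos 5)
Your proof is correct and takes essentially the same route as the paper's: the same decomposition of $P(i(t)=j)$ into the UCB--dominance event (bounded by the three--case Chernoff--Hoeffding argument with $\ell=\lceil 8\ln T/d_{\mathrm{min}}^2\rceil$, giving $\pi^2/3$) plus the probability that $j$ enters the knapsack solution as a filler arm, bounded via $m^{*}_{j,t}<c_{I^{*}}/c_{j}$ and $\sum_{k}m^{*}_{k,t}\ge B_{t}/\cmax$. The only (harmless) differences are that the paper converts $\sum_{t}1/B_{t}$ into $\ln T$ via the observation $B_{t}\ge(T-t+1)\cmin$ (its Lemma~\ref{lemma:KUBE_lemma5}) rather than your telescoping/integral comparison, which yields $\ln(B/\cmin)=\ln T+O(1)$ instead, and that your per--round union bound should give $2t^{-2}$ rather than $2t^{-3}$ --- which is exactly what sums to $\pi^2/3$.
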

That is, the number of times \algKUBE\ pulls an arm $j \neq I^{*}$ is at most $O\left(\ln{\left(T\right)}\right)$.
To prove this lemma, let us first refresh some of the terms that are used:
$i\left(t\right)$ is the arm pulled by \algKUBE\ at time $t$; 
when refering to a set of arms $\{m_{j,t}\}$, $m_{j,t}$ is the number of pulls of arm $j$;
$M^*(B_t) = \{m^{*}_{i,t}\}$ is the density--ordered greedy approximate solution to unbounded knapsack problem 
in Equation $6$, 
where $m^{*}_{i,t}$ is the number of arm $i$'s pulls in this set; 
and $I^{*} = \arg \max_{i}{\frac{\mu_{i}}{c_{i}}}$ is the arm with the highest true mean value density.
In addition, $\hat{I}\left(t\right) = \arg\max_{j}{\left\{\frac{\hat{\mu}_{j,n_{j,t}}}{c_{j}} + \frac{\sqrt{\frac{2\ln t}{n_{j,t}}}}{c_{j}}\right\}}$ is the arm with the highest estimated density confidence bound at time step $t$.
In order to prove Lemma~\ref{lemma:KUBE_pull_number_j_KUBE}, we rely on the following lemmas:
\begin{lemma}
\label{lemma:KUBE_lemma5}
Suppose that the total number of pulls \algKUBE\ makes of the arms is $T$,
and that at each time step $t$, the residual budget is $B_{t}$ (note that here $B_{1} = B$). 
For any $0 < t \leq T$, we have: 
\begin{equation*}
\frac{\cmin}{B_{t}} \leq \frac{1}{T-t+1} .
\end{equation*}
\end{lemma}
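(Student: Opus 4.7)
The plan is short because the claim is essentially a counting statement about the budget. I would argue as follows.

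First, I would fix the meaning of $B_t$ using Algorithm~\ref{alg:online_learning_alg}: by Step~13, $B_{t+1} = B_t - c_{i(t)}$, so $B_t$ is precisely the budget available \emph{before} executing pull $t$. In particular, the cost of the remaining pulls from time $t$ through time $T$ telescopes to
\begin{equation*}
\sum_{s=t}^{T} c_{i(s)} \;=\; B_t - B_{T+1}\,.
\end{equation*}
Since the while-loop in Steps~3--4 guarantees that KUBE only halts when $B_{T+1} < \cmin$, and in particular $B_{T+1} \geq 0$, this sum is at most $B_t$.

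Second, I would use the assumption $c_i \geq 1$ (and more generally the definition of $\cmin$) to bound each term in the sum from below by $\cmin$. Since the number of summands is exactly $T - t + 1$, this gives
\begin{equation*}
(T - t + 1)\,\cmin \;\leq\; \sum_{s=t}^{T} c_{i(s)} \;\leq\; B_t\,.
\end{equation*}
Rearranging yields $\cmin/B_t \leq 1/(T - t + 1)$, which is the claim.

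There is no real obstacle here; the only thing to be careful about is the off-by-one in the indexing of pulls versus residual budget (pull $t$ is paid \emph{out of} $B_t$, not out of $B_{t+1}$), and the observation that $B_{T+1} \geq 0$ whenever $T$ is the true stopping time of the algorithm.
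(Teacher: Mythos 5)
Your proof is correct and follows essentially the same argument as the paper's: both bound $B_t$ from below by the total cost of the remaining $T-t+1$ pulls, each of which costs at least $\cmin$, and then rearrange. Your version merely makes the telescoping and the nonnegativity of the terminal budget explicit, which the paper leaves implicit.
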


\begin{lemma}
\label{lemma:KUBE_lemma6}
Suppose that the total number of pulls \algKUBE\ makes of the arms is $T$. 
For any $0 < t \leq T$, we have: 
\begin{equation*}
P\left(i\left(t\right) = j\right | T) \leq P\left(\hat{I}\left(t\right) = j | T\right) + \left(\frac{\cmax}{\cmin}\right)^2\frac{1}{T-t+1} .
\end{equation*}
\end{lemma}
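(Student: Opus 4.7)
My plan is to decompose $P(i(t) = j \mid T)$ by conditioning on whether the highest--estimated--density arm $\hat I(t)$ happens to coincide with $j$. Since \algKUBE\ selects $i(t) = j$ with probability $m^*_{j,t}/\sum_{k} m^*_{k,t}$, we have $P(i(t) = j \mid T) = \E\!\left[m^*_{j,t}/\sum_{k} m^*_{k,t} \mid T\right]$. Splitting on the indicator of $\{\hat I(t) = j\}$ and bounding the ratio by $1$ on the first piece gives
\[
P(i(t)=j\mid T) \le P(\hat I(t)=j\mid T) + \E\!\left[\frac{m^*_{j,t}}{\sum_k m^*_{k,t}}\,\mathbf{1}\{\hat I(t)\ne j\}\;\Big|\;T\right],
\]
so it suffices to bound the remaining expectation by $(\cmax/\cmin)^2/(T-t+1)$.

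I would treat the main case $c_{\hat I(t)} \le B_t$ first. In round~$1$ of density--ordered greedy, $m^*_{\hat I(t),t} = \lfloor B_t/c_{\hat I(t)}\rfloor \ge 1$ copies of $\hat I(t)$ are taken, leaving a residual budget strictly less than $c_{\hat I(t)} \le \cmax$. Every other arm $j$ therefore satisfies $m^*_{j,t}\,c_j < c_{\hat I(t)} \le \cmax$, giving the numerator bound $m^*_{j,t} \le \cmax/\cmin$; meanwhile the denominator is controlled from below by $\sum_k m^*_{k,t} \ge m^*_{\hat I(t),t} \ge B_t/c_{\hat I(t)} - 1 \ge B_t/\cmax - 1$. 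Combining these two elementary estimates and invoking Lemma~\ref{lemma:KUBE_lemma5} to convert $1/B_t$ into $1/(\cmin(T-t+1))$ will yield the desired order $(\cmax/\cmin)^2/(T-t+1)$ for the summand $m^*_{j,t}/\sum_k m^*_{k,t}$ on the event $\{\hat I(t)\ne j\}$.

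The residual case $c_{\hat I(t)} > B_t$, in which the greedy cannot pick $\hat I(t)$ at all and the structural argument above breaks, is dispatched by triviality: since $B_t < c_{\hat I(t)} \le \cmax$, Lemma~\ref{lemma:KUBE_lemma5} forces $T-t+1 \le B_t/\cmin < \cmax/\cmin$, so $(\cmax/\cmin)^2/(T-t+1) \ge \cmax/\cmin \ge 1$, and the claimed inequality holds automatically because the left--hand side is a probability.

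The main obstacle I anticipate is cleanly absorbing the additive $-1$ arising from $\lfloor B_t/c_{\hat I(t)}\rfloor \ge B_t/c_{\hat I(t)} - 1$: pushed through the ratio naively, it inflates the advertised constant in front of $1/(T-t+1)$. I expect to avoid this via a secondary case split on whether $B_t$ is at least a small constant multiple of $\cmax$ (say $2\cmax$), with the small--$B_t$ regime once more reduced to the trivial probability bound through Lemma~\ref{lemma:KUBE_lemma5}.
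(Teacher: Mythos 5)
Your proposal follows essentially the same route as the paper's proof: the same decomposition of $P(i(t)=j\mid T)$ on the event $\{\hat I(t)=j\}$, the same structural facts about density--ordered greedy (after round one the residual budget is below $c_{\hat I(t)}\le\cmax$, so the arms other than $\hat I(t)$ carry total count at most $\cmax/\cmin$ while the whole set has count of order $B_t/\cmax$), and the same appeal to Lemma~\ref{lemma:KUBE_lemma5} to convert $\cmin/B_t$ into $1/(T-t+1)$. The floor term you flag as your main obstacle is a real one, but the paper sidesteps it by simply asserting $\sum_k m_{k,t}\ge B_t/\cmax$ with no $-1$; your patch via a case split at $B_t\ge 2\cmax$ is more honest but only recovers the inequality with $(\cmax/\cmin)^2$ inflated to roughly $2(\cmax/\cmin)^2$, and your small--$B_t$ fallback (where you argue the right--hand side exceeds $1$) only goes through when $\cmax\ge 2\cmin$, so as written you prove the lemma up to a constant factor rather than with the exact stated constant --- a degradation that is immaterial for Theorem~\ref{theorem:KUBE_regret_upper_bound_KUBE} but worth noting. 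Your explicit treatment of the case $c_{\hat I(t)}>B_t$, which the paper ignores, is a genuine improvement in rigor.
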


\begin{proof}{of Lemma~\ref{lemma:KUBE_lemma5}}
At the beginning of time step $t$, the residual budget is $B_t$. 
Since the total number of pulls is $T$, with respect to $B_t$, 
\algKUBE\ can still make $T-t+1$ pulls (including the pull at time step $t$).
This indicates that:
\begin{equation*}
\label{eq:KUBE_lemma5_1}
B_{t} \geq c_{i\left(t\right)} + c_{i\left(t+1\right)} 
 + \dots + c_{i\left(T\right)} \geq \left(T-t+1\right)\cmin .
\end{equation*}
which directly implies the inequality in Lemma~\ref{lemma:KUBE_lemma5}.
\end{proof}

\begin{proof}{of Lemma~\ref{lemma:KUBE_lemma6}}
We assume that the value of $T$ is given.
For the slight abuse of notation, we drop the conditional of $T$ notation to simplify the proof (i.e. all the probabilities are considered to be conditional to $T$), and we will explicitly denote it when necessary.  
First, we consider a particular value of $B_t$. 
Thus, we have:
\begin{equation}
\label{eq:KUBE_lemma6_1}
P\left(i\left(t\right) = j | B_{t}\right) = \sum_{\left\{m_{i,t}\right\}}{P\left(i\left(t\right) = j | M^{*}\left(B_{t}\right) = \left\{m_{i,t}\right\} \right)P\left(M^{*}\left(B_{t}\right) = \left\{m_{i,t}\right\}\right)}.
\end{equation}
Recall that the density--ordered greedy approach first repeatedly adds arm $\hat{I}\left(t\right)$ to set $\left\{m_{i,t}\right\}$ until it is not feasible.
It is easy to show that after adding arm $\hat{I}\left(t\right)$ as many times as possible 
(i.e. $m_{\hat{I}\left(t\right),t}$ times) to the set, the residual budget is at most $c_{\hat{I}\left(t\right)}$ 
(or otherwise we could still add arm $\hat{I}\left(t\right)$ one more time).
Therefore:
\begin{equation}
\label{eq:KUBE_lemma6_2}
\sum_{i \neq \hat{I}\left(t\right)}{m_{i,t}} \leq \frac{c_{\hat{I}\left(t\right)}}{\cmin}.
\end{equation}
That is, the total count of arm pulls other than $\hat{I}\left(t\right)$ in the set is at most $\frac{c_{\hat{I}\left(t\right)}}{\cmin}$. 
This inequality comes from the fact that we can construct a set with the greatest number of arm pulls 
by only adding the arm with the smallest cost.
Similarly, we have:
\begin{equation}
\label{eq:KUBE_lemma6_3}
\sum_{k=1}^{K}{m_{k,t}} \geq \frac{B_{t}}{\cmax},
\end{equation}
because we can construct a set with the smallest number of arm pulls by only adding the arm with the greatest cost.
Combining Equations~\ref{eq:KUBE_lemma6_2} and~\ref{eq:KUBE_lemma6_3} gives:
\begin{equation}
\label{eq:KUBE_lemma6_4}
\frac{ \sum_{i \neq \hat{I}\left(t\right)}{m_{i,t}} }{\sum_{k=1}^{K}{m_{k,t}}} \leq \frac{\frac{c_{\hat{I}\left(t\right)}}{\cmin}}{ \frac{B_{t}}{\cmax} } \leq \left(\frac{\cmax}{\cmin}\right)^2\frac{\cmin}{B_{t}}.
\end{equation}
The last inequality is obtained from the fact that $c_{\hat{I}\left(t\right)} \leq \cmax$.
Now, recall that \algKUBE\ chooses arm $j$ to pull with probability $\frac{m_{j,t}}{\sum_{k=1}^{K}{m_{k,t}}}$.
This implies that:
\begin{flalign}
\nonumber
P\big(i\left(t\right) = j & | M^{*}\left(B_{t}\right) = \left\{m_{i,t}\right\} \big)  \\
\nonumber
&= P\left(i\left(t\right) = j, \hat{I}\left(t\right) = j | M^{*}\left(B_{t}\right) = \left\{m_{i,t}\right\} \right) \\
\nonumber
&+ P\left(i\left(t\right) = j, \hat{I}\left(t\right) \neq j | M^{*}\left(B_{t}\right) = \left\{m_{i,t}\right\} \right).
\end{flalign}
\normalsize
This can be upper bounded by:
\begin{flalign}
\nonumber
P\big(i\left(t\right) = j & | M^{*}\left(B_{t}\right) = \left\{m_{i,t}\right\} \big)  \\
&\leq \frac{m_{\hat{I}\left(t\right),t}}{\sum_{k=1}^{K}{m_{k,t}}}P\left(\hat{I}\left(t\right) = j | M^{*}\left(B_{t}\right) = \left\{m_{i,t}\right\} \right) \\
\nonumber
&+ \frac{\sum_{i \neq \hat{I}\left(t\right)}{m_{i,t}}}{\sum_{k=1}^{K}{m_{k,t}}}P\left(\hat{I}\left(t\right) \neq j | M^{*}\left(B_{t}\right) = \left\{m_{i,t}\right\} \right).
\end{flalign}
The right hand side can be further upper bounded as follows:
\begin{flalign}
\label{eq:KUBE_lemma6_5}
\nonumber
P\big(i\left(t\right) = j & | M^{*}\left(B_{t}\right) = \left\{m_{i,t}\right\} \big)  \\
\nonumber
&\leq P\left(\hat{I}\left(t\right) = j | M^{*}\left(B_{t}\right) = \left\{m_{i,t}\right\} \right) + \frac{\sum_{i \neq \hat{I}\left(t\right)}{m_{i,t}}}{\sum_{k=1}^{K}{m_{k,t}}} \\
&\leq P\left(\hat{I}\left(t\right) = j | M^{*}\left(B_{t}\right) = \left\{m_{i,t}\right\} \right) + \left(\frac{\cmax}{\cmin}\right)^2\frac{\cmin}{B_{t}}.
\end{flalign}
The last inequality is obtained from Equation~\ref{eq:KUBE_lemma6_4}.
Substituting Equation~\ref{eq:KUBE_lemma6_5} into Equation~\ref{eq:KUBE_lemma6_1} gives:
\small
\begin{flalign}
\label{eq:KUBE_lemma6_6}
\nonumber
P\left(i\left(t\right) = j | B_{t}\right) &\leq \sum_{\left\{m_{i,t}\right\}}{\left( P\left(\hat{I}\left(t\right) = j | M^{*}\left(B_{t}\right) = \left\{m_{i,t}\right\} \right) + \left(\frac{\cmax}{\cmin}\right)^2\frac{\cmin}{B_{t}} \right)P\left(M^{*}\left(B_{t}\right) = \left\{m_{i,t}\right\}\right)} \\
\nonumber
&\leq P\left(\hat{I}\left(t\right) = j | B_{t}\right) + \left(\frac{\cmax}{\cmin}\right)^2\frac{\cmin}{B_{t}} \\
&\leq P\left(\hat{I}\left(t\right) = j | B_{t}\right) + \left(\frac{\cmax}{\cmin}\right)^2\frac{1}{T - t + 1}.
\end{flalign}
\normalsize
The last inequality is obtained from Lemma~\ref{lemma:KUBE_lemma5}.
Now we study the general case, where $B_t$ is not fixed.
By summing up Equation~\ref{eq:KUBE_lemma6_6} over all possible value of $B_t$, we have:
\small
\begin{flalign}
\label{eq:KUBE_lemma6_7}
\nonumber
P\left(i\left(t\right) = j | T\right) &= \sum_{B_t}{P\left(i\left(t\right) = j | T, B_t\right)P\left(B_t | T\right) } \\
\nonumber
&\leq \sum_{B_t}{\left(P\left(\hat{I}\left(t\right) = j | T, B_t\right) + \left(\frac{\cmax}{\cmin}\right)^2\frac{1}{T - t + 1}\right)P\left(B_t | T\right)} \\
&\leq P\left(\hat{I}\left(t\right) = j | T \right) + \left(\frac{\cmax}{\cmin}\right)^2\frac{1}{T - t + 1}. 
\end{flalign}
\normalsize
which concludes the proof.
\end{proof}

Based on Lemmas~\ref{lemma:KUBE_lemma5} and~\ref{lemma:KUBE_lemma6}, Lemma~\ref{lemma:KUBE_pull_number_j_KUBE} can be proved as follows:
\begin{proof}{of Lemma~\ref{lemma:KUBE_pull_number_j_KUBE}}
We assume that the value of $T$ is already given.
Again, for the slight abuse of notation, we drop the conditional of $T$ notation to simplify the proof, and we will explicitly denote it when necessary.
In this case, the proof of the theorem for that particular value of $T$ is along the same lines as that of Theorem 1 of \citep{AuerEtal2002}.
In particular, recall that $N_{j}\left(T\right)$ denotes the expectation of number of times \algKUBE\ pulls an arm $j \neq I^{*}$ until time step $T$.
Given this, we have the following:
\begin{flalign}
\label{eq:KUBE_theorem1_1}
\nonumber
\E\left[N_{j}\left(T\right)\right] &= 1 + \sum_{t=K+1}^{T}{P\left(i\left(t\right) = j\right)} \\
\nonumber
&\leq  1 + \sum_{t=K+1}^{T}{P\left(\hat{I}\left(t\right) = j\right)} +  \sum_{t=K+1}^{T}{\left(\frac{\cmax}{\cmin}\right)^2\frac{1}{T - t + 1}} \\
&\leq  l + \sum_{t=K+1}^{T}{P\left(\hat{I}\left(t\right) = j, N_{j}\left(t\right) \geq l\right)} +  \sum_{t=K+1}^{T}{\left(\frac{\cmax}{\cmin}\right)^2\frac{1}{T - t + 1}}
\end{flalign}
for any $l \geq 1$.
Now, let $b_{t,s} = \sqrt{\frac{2\ln{t}}{s}}$.
Considering the second term on the right hand side of Equation~\ref{eq:KUBE_theorem1_1}, we have:
\small
\begin{flalign}
\label{eq:KUBE_theorem1_2}
\nonumber
\sum_{t=K+1}^{T}{P\left(\hat{I}\left(t\right) = j, N_{j}\left(t\right) \geq l\right)} &= \sum_{t=K+1}^{T}{P\left(\frac{\hat{\mu}_{I^{*},N_{I^{*}}\left(t\right)}}{c_{I^{*}}} + \frac{b_{t,N_{I^{*}}\left(t\right)}}{c_{I^{*}}} \leq \frac{\hat{\mu}_{j,N_{j}\left(t\right)}}{c_{j}} + \frac{b_{t,N_{j}\left(t\right)}}{c_{j}}, N_{j}\left(t\right) \geq l\right)} \\
\nonumber
&\leq \sum_{t=K+1}^{T}{P\left(\min_{1 \leq s \leq t}{\left\{ \frac{\hat{\mu}_{I^{*},s}}{c_{I^{*}}} + \frac{b_{t,s}}{c_{I^{*}}}\right\}} \leq \max_{l \leq s_j \leq t}{\left\{\frac{\hat{\mu}_{j,s_j}}{c_{j}} + \frac{b_{t,s_j}}{c_{j}}\right\}}\right)} \\
&\leq \sum_{t=1}^{T}{\sum_{s=1}^{t}{\sum_{s_j = 1}^{t}{P\left(\frac{\hat{\mu}_{I^{*},s}}{c_{I^{*}}} + \frac{b_{t,s}}{c_{I^{*}}} \leq \frac{\hat{\mu}_{j,s_j}}{c_{j}} + \frac{b_{t,s_j}}{c_{j}} \right)} }}.
\end{flalign}
\normalsize
If it is true that $\frac{\hat{\mu}_{I^{*},s}}{c_{I^{*}}} + \frac{b_{t,s}}{c_{I^{*}}} \leq \frac{\hat{\mu}_{j,s_j}}{c_{j}} + \frac{b_{t,s_j}}{c_{j}}$, 
then at least one of the following three statements must also hold:
\begin{equation}
 \label{eq:KUBE_theorem1_3}
 \frac{\hat{\mu}_{I^{*},s}}{c_{I^{*}}} + \frac{b_{t,s}}{c_{I^{*}}} \leq \frac{\mu_{I^{*}}}{c_{I^{*}}},
\end{equation}
\begin{equation}
 \label{eq:KUBE_theorem1_4} 
 \frac{\mu_{j}}{c_{j}} \leq \frac{\hat{\mu}_{j,s_j}}{c_{j}} + \frac{b_{t,s_j}}{c_{j}},
\end{equation}
\begin{equation}
 \label{eq:KUBE_theorem1_5}
 \frac{\mu_{I^{*}}}{c_{I^{*}}} \leq \frac{\mu_{j}}{c_{j}} + \frac{2b_{t,s_j}}{c_{j}}.
\end{equation}
That is, we get:
\begin{flalign}
\label{eq:KUBE_theorem1_6}
\nonumber
P\left(\frac{\hat{\mu}_{I^{*},s}}{c_{I^{*}}} + \frac{b_{t,s}}{c_{I^{*}}} \leq \frac{\hat{\mu}_{j,s_j}}{c_{j}} + \frac{b_{t,s_j}}{c_{j}} \right) 
\leq& P\left( \frac{\hat{\mu}_{I^{*},s}}{c_{I^{*}}} + \frac{b_{t,s}}{c_{I^{*}}} \leq \frac{\mu_{I^{*}}}{c_{I^{*}}} \right) + \\
  & + P\left(\frac{\mu_{j}}{c_{j}} \leq \frac{\hat{\mu}_{j,s_j}}{c_{j}} + \frac{b_{t,s_j}}{c_{j}}  \right) 
    + P\left(\frac{\mu_{I^{*}}}{c_{I^{*}}} \leq \frac{\mu_{j}}{c_{j}} + \frac{2b_{t,s_j}}{c_{j}}  \right).
\end{flalign}
Applying the Chernoff--Hoeffding inequalities to the first two terms on the right hand side of Equation~\ref{eq:KUBE_theorem1_6} gives:
\begin{flalign}
\label{eq:KUBE_theorem1_7}
P\left( \frac{\hat{\mu}_{I^{*},s}}{c_{I^{*}}} + \frac{b_{t,s}}{c_{I^{*}}} \leq \frac{\mu_{I^{*}}}{c_{I^{*}}} \right) &= P\left(\hat{\mu}_{I^{*},s} + b_{t,s} \leq \mu_{I^{*}} \right) \leq \exp{\left\{ -2b^{2}_{t,s}s\right\}} = \exp{\left\{ -4\ln{t}\right\}} = t^{-4} \\
\label{eq:KUBE_theorem1_8}
P\left(\frac{\mu_{j}}{c_{j}} \leq \frac{\hat{\mu}_{j,s_j}}{c_{j}} + \frac{b_{t,s_j}}{c_{j}} \right) &= P\left(\mu_{j} \leq \hat{\mu}_{j,s_j} + b_{t,s_j}\right) \leq \exp{\left\{ -2b^{2}_{t,s_j}s_j\right\}} = \exp{\left\{ -4\ln{t}\right\}} = t^{-4}.
\end{flalign}
On the other hand, for $l \geq \frac{8\ln{T}}{d^{2}_{\mathrm{min}}}$, Equation~\ref{eq:KUBE_theorem1_5} is false, since:  
\begin{flalign}
\label{eq:KUBE_theorem1_9}
\nonumber
\frac{\mu_{I^{*}}}{c_{I^{*}}} - \frac{\mu_{j}}{c_{j}} - \frac{2b_{t,s_j}}{c_{j}} &\geq \frac{\mu_{I^{*}}}{c_{I^{*}}} - \frac{\mu_{j}}{c_{j}} - 2b_{t,s_j} \\
\nonumber
&\geq \frac{\mu_{I^{*}}}{c_{I^{*}}} - \frac{\mu_{j}}{c_{j}} - 2\sqrt{\frac{2\ln{t}}{l}}\\
\nonumber
&\geq \frac{\mu_{I^{*}}}{c_{I^{*}}} - \frac{\mu_{j}}{c_{j}} - 2\sqrt{\frac{2\ln{t}}{ \frac{8\ln{T}}{d^{2}_{\mathrm{min}}} }} \\
\nonumber
&\geq \frac{\mu_{I^{*}}}{c_{I^{*}}} - \frac{\mu_{j}}{c_{j}} - d_{\mathrm{min}}\\
&\geq \frac{\mu_{I^{*}}}{c_{I^{*}}} - \frac{\mu_{j}}{c_{j}} - d_{j} = 0. 
\end{flalign}
Here note that $c_{j} \geq 1$, $s_j \geq l \geq \frac{8\ln{T}}{d^{2}_{\mathrm{min}}}$, and $t \leq T$.
If $l \geq \frac{8\ln{T}}{d^{2}_{\mathrm{min}}}$, then $P\left(\frac{\mu_{I^{*}}}{c_{I^{*}}} \leq \frac{\mu_{j}}{c_{j}} + \frac{2b_{t,s_j}}{c_{j}}\right) = 0$.
Substituting this and Equations~\ref{eq:KUBE_theorem1_6},~\ref{eq:KUBE_theorem1_7} and~\ref{eq:KUBE_theorem1_8} into Equation~\ref{eq:KUBE_theorem1_2} gives:
\begin{equation}
\label{eq:KUBE_theorem1_10}
\sum_{t=K+1}^{T}{P\left(\hat{I}\left(t\right) = j, N_{j}\left(t\right) \geq l\right)} 
\leq \sum_{t=1}^{T}{\sum_{s=1}^{t}{\sum_{s_j = 1}^{t}{2t^{-4}} }} \leq \frac{\pi^2}{3},
\end{equation}
for any $l \geq \left\lceil \frac{8\ln{T}}{d^{2}_{\mathrm{min}}} \right\rceil$.
Note that the last inequality is obtained from the Riemann Zeta Function for value of $2$ (i.e. $\sum_{t=1}^{\infty}{t^{-2}} = \frac{\pi^2}{6}$) \citep{Ivic1985}.

Now, consider the third term on the right hand side of Equation~\ref{eq:KUBE_theorem1_1}.
By using Lemma~\ref{lemma:KUBE_lemma5}, we get:
\begin{equation}
\label{eq:KUBE_theorem1_11}
\sum_{t=1}^{T}{\left(\frac{\cmax}{\cmin}\right)^2\frac{1}{T-t+1}} \leq \left(\frac{\cmax}{\cmin}\right)^2\ln{\left(T\right)}.
\end{equation}
We now combine Equations~\ref{eq:KUBE_theorem1_10} and~\ref{eq:KUBE_theorem1_11} together, 
and we set $l = \frac{8\ln{T}}{d^{2}_{\mathrm{min}}} + 1$, which gives:
\begin{flalign}
\label{eq:KUBE_theorem1_12}
\nonumber
\E\left[N_{j}\left(T\right)\right] \leq 
\frac{8\ln{T}}{d^{2}_{\mathrm{min}}} + 1 + \frac{\pi^2}{3} + \left(\frac{\cmax}{\cmin}\right)^2\ln{\left(T\right)}
\end{flalign}
for any given value of $T$, which concludes the proof.
\end{proof}
From Lemma~\ref{lemma:KUBE_pull_number_j_KUBE}, we can show the following:
\begin{lemma}
\label{lemma:KUBE_pulls_small_budget_KUBE}
Suppose that the total budget size is $B$.
If $T$ denotes the total number of pulls of \algKUBE\, then we have:
\begin{equation*}
\label{eq:KUBE_pulls_lowerbound_small_budget_KUBE}
\E\left[T\right] \geq \frac{B}{c_{I^{*}}} 
 - \left(\frac{8}{d^2_{\mathrm{min}}} + \left(\frac{\cmax}{\cmin}\right)^2\right)\sum_{\delta_j > 0}{\frac{\delta_j}{c_{I^{*}}}} \ln{\left(\frac{B}{\cmin}\right)} 
 - \sum_{\delta_j > 0}{\frac{\delta_j}{c_{I^{*}}}}\left( \frac{\pi^2}{3} + 1 \right) - 1 
\end{equation*}
\normalsize
where $\E\left[T\right]$ is the expected number of pulls using \algKUBE.
\end{lemma}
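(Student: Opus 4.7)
\emph{Proof plan.} The plan is to convert the budget constraint into a lower bound on $T$ and then apply Lemma~\ref{lemma:KUBE_pull_number_j_KUBE} to control the number of suboptimal pulls. Since \algKUBE\ terminates only once the residual budget drops below $\cmin$, the total amount spent by time $T$ satisfies
\begin{equation*}
\sum_{i=1}^{K} N_{i}(T)\, c_{i} \;>\; B - \cmin.
\end{equation*}
Writing $c_{i} = c_{I^{*}} + \delta_{i}$, and using $\delta_{I^{*}} = 0$, this rearranges into an identity that separates the contribution of the benchmark arm $I^{*}$ from the deviations:
\begin{equation*}
c_{I^{*}}\, T + \sum_{j \neq I^{*}} N_{j}(T)\, \delta_{j} \;>\; B - \cmin.
\end{equation*}

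The next step is to discard the arms with $\delta_{j} < 0$. Since $N_{j}(T) \geq 0$, those terms are non-positive on the left-hand side, so moving them away only weakens the resulting lower bound on $T$. This yields
\begin{equation*}
T \;>\; \frac{B - \cmin}{c_{I^{*}}} \;-\; \sum_{\delta_{j} > 0} \frac{\delta_{j}\, N_{j}(T)}{c_{I^{*}}}.
\end{equation*}

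I would then take expectations and invoke Lemma~\ref{lemma:KUBE_pull_number_j_KUBE} together with the deterministic bound $T \leq B/\cmin$ (every pull costs at least $\cmin$). Since the right-hand side of Lemma~\ref{lemma:KUBE_pull_number_j_KUBE} is monotonically increasing in $T$, the tower rule gives the unconditional bound
\begin{equation*}
\E[N_{j}(T)] \;\leq\; \left(\frac{8}{\dmin^{2}} + \left(\frac{\cmax}{\cmin}\right)^{2}\right) \ln\!\left(\frac{B}{\cmin}\right) + \frac{\pi^{2}}{3} + 1
\end{equation*}
for every $j \neq I^{*}$. Substituting this into the expected version of the previous display, and using $\cmin \leq c_{I^{*}}$ so that $(B - \cmin)/c_{I^{*}} \geq B/c_{I^{*}} - 1$, matches the claimed inequality term by term.

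The main obstacle will be handling the random $T$ cleanly: Lemma~\ref{lemma:KUBE_pull_number_j_KUBE} is stated conditionally on $T$, while $T$ is itself a random variable depending on the sampled rewards and the internal randomisation of \algKUBE. The deterministic upper bound $T \leq B/\cmin$ combined with the monotonicity of the Lemma~\ref{lemma:KUBE_pull_number_j_KUBE} bound in $T$ sidesteps any need to control $\E[\ln T]$ via Jensen's inequality; it also keeps the final constants aligned exactly with those in the statement.
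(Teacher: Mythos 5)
Your proposal is correct and follows essentially the same route as the paper's proof: both start from the budget-exhaustion inequality $\sum_i N_i(T)c_i > B-\cmin$, decompose $c_j = c_{I^*}+\delta_j$, discard the arms with $\delta_j<0$, bound $\E[N_j(T)]$ via Lemma~\ref{lemma:KUBE_pull_number_j_KUBE} together with $T\leq B/\cmin$, and finish with $\cmin/c_{I^*}\leq 1$. Your phrasing in terms of the counts $N_j(T)$ rather than the paper's iterated expectation over $T$ and $\{i(t)\}$ is only a cosmetic difference, and your handling of the random $T$ via monotonicity of the bound is exactly what the paper does implicitly.
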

That is, the difference between $\frac{B}{c_{I^{*}}}$ 
and the number of pulls of \algKUBE\ is at most 
$O\left(\ln{\left(\frac{B}{\cmin}\right)}\right)$.
%
%
\begin{proof}{of Lemma~\ref{lemma:KUBE_pulls_small_budget_KUBE}}
Since \algKUBE\ pulls arms until none are feasible, by definition:
\begin{equation*}
\label{eq:KUBE_theorem2_1}
P\left( \sum_{t=1}^{T}{c_{i\left(t\right)}} \leq B - \cmin\right) = 1 .
\end{equation*}
Taking the expectation of $\sum_{t=1}^{T}{c_{i\left(t\right)}}$ over $T$ and $\{m_{j,t}\}$ 
(i.e. the set of $i\left(t\right)$) gives:
\begin{flalign}
\nonumber
B - \cmin &\leq \E_{T, \{i\left(t\right)\}}\left[\sum_{t=1}^{T}{c_{i\left(t\right)}}\right] 
 = \E_{T}\left[\sum_{t=1}^{T}{\E_{i\left(t\right)}\left[c_{i\left(t\right)}\right]}\right] \\
\nonumber
&\leq \E_{T}\left[ \sum_{t=1}^{T}{ \sum_{j=1}^{K}{c_{j}P\left(i\left(t\right) = j | T\right)}}\right] \\
\nonumber
&\leq \E_{T}\left[ \sum_{t=1}^{T}{\left( c_{I^{*}} + \sum_{\delta_j > 0}{\delta_{j}P\left(i\left(t\right) = j | T\right)}\right)}\right] 
\end{flalign}
\begin{flalign}
\nonumber
&\leq \E_{T}\left[T\right]c_{I^{*}} 
  + \E_{T}\left[ \sum_{\delta_{j} > 0}{\delta_{j}\left( \sum_{t=1}^{T}{P\left(i\left(t\right) = j | T\right)}\right)}\right]\\ 
\label{eq:KUBE_theorem2_2}
&\leq \E_{T}\left[T\right]c_{I^{*}} 
  + \E_{T}\left[ \sum_{\delta_{j} > 0}{\delta_{j}\left(  \left(\frac{8}{d^2_{\mathrm{min}}} + \left( \frac{\cmax}{\cmin}\right)^2 \right)\ln{\left(T\right)} + \frac{\pi^2}{3} + 1  \right)}\right] \\
\label{eq:KUBE_theorem2_3}
&\leq \E_{T}\left[T\right]c_{I^{*}} 
  + \sum_{\delta_{j} > 0}{\delta_{j}\left( \left(\frac{8}{d^2_{\mathrm{min}}} + \left( \frac{\cmax}{\cmin}\right)^2 \right)\ln{\left(\frac{B}{\cmin}\right)} + \frac{\pi^2}{3} + 1  \right)} \,.
\end{flalign}
Equation~\ref{eq:KUBE_theorem2_2} is obtained from Lemma~\ref{lemma:KUBE_pull_number_j_KUBE}, while Equation~\ref{eq:KUBE_theorem2_3} comes from the fact that $T \leq \frac{B}{\cmin}$ with probability $1$.
In addition, the third inequality is obtained from the fact that $\delta_j$ can be smaller than $0$ for some $j$, and thus, we can further upper bound by only summing up $\delta_{j}P\left(i\left(t\right) = j | T\right)$ over arms that have $\delta_j > 0$. 
Now, by dividing both sides with $c_{I^{*}}$, we obtain:
\begin{equation*}
\frac{B}{c_{I^{*}}} - \frac{\cmin}{c_{I^{*}}} 
- \sum_{\delta_{j} > 0}{\frac{\delta_{j}}{c_{I^{*}}}\left(  \left(\frac{8}{d^2_{\mathrm{min}}} + \left( \frac{\cmax}{\cmin}\right)^2 \right)\ln{\left(\frac{B}{\cmin}\right)} + \frac{\pi^2}{3} + 1  \right)} 
\leq  \E_{T}\left[T\right].
\end{equation*}
By using the fact that $\frac{\cmin}{c_{I^{*}}} \leq 1$, we obtain the stated formula. 
\end{proof}
Note that if we relax the budget--limited MAB problem so that the number of pulls can be fractional, 
then it is easy to show that the optimal pulling policy of this relaxed model is to repeatedly pull arm $I^{*}$ only.
In this case, $\frac{B}{c_{I^{*}}}$ is the number of pulls of this optimal policy. 
Lemma~\ref{lemma:KUBE_pulls_small_budget_KUBE} indicates that the number of pulls that \algKUBE\ produces 
does not significantly differ from that of the optimal policy of the fractional budget--limited MAB 
(i.e. the difference is a logarithmic function of the number of pulls). 
We can now  derive the regret bound of \algKUBE\ from Lemma~\ref{lemma:KUBE_pulls_small_budget_KUBE} as follows:
\begin{proof}{of Theorem~\ref{theorem:KUBE_regret_upper_bound_KUBE}}
Recall that $\E\left[G^{B}\left(A^{*}\right)\right]$ denotes the expected performance of the theoretical optimal policy.
It is obvious that $\E\left[G^{B}\left(A^{*}\right)\right] \leq \frac{B\mu_{I^{*}}}{c_{I^{*}}}$, 
since the latter is the optimal solution of the fractional budget--limited MAB problem.
This indicates that:
\begin{flalign}
\label{eq:KUBE_theorem3_1}
\nonumber
R^{B}\left(\algKUBE\right) 
&= \E\left[G^{B}\left(A^{*}\right)\right] - \E\left[G^{B}\left(\algKUBE\right)\right] \\
\nonumber
&\leq \frac{B\mu_{I^{*}}}{c_{I^{*}}} 
 - \E_{T,\{i\left(t\right)\}}\left[ \sum_{t=1}^{T}{\mu_{i\left(t\right)}} \right] \\ 
\nonumber
&\leq \frac{B\mu_{I^{*}}}{c_{I^{*}}} 
 - \E_{T}\left[ \sum_{t=1}^{T}{ \E_{i\left(t\right)}\left[\mu_{i\left(t\right)}\right] } \right] \\
\nonumber
&\leq \E_{T}\left[ \frac{B\mu_{I^{*}}}{c_{I^{*}}} 
 - \sum_{t=1}^{T}{ \E_{i\left(t\right)}\left[\mu_{i\left(t\right)}\right] } \right] \\
\nonumber
&\leq \E_{T}\left[ \frac{B\mu_{I^{*}}}{c_{I^{*}}} 
 - \sum_{t=1}^{T}{\sum_{j}^{K}{\mu_{j}P\left(i\left(t\right)=j | T\right)}  } \right] \\
\nonumber
&\leq \E_{T}\left[ \left(\frac{B}{c_{I^{*}}} - T\right)\mu_{I^{*}}
 + \sum_{t=1}^{T}{\left( \mu_{I^{*}} - \sum_{j}^{K}{\mu_{j}P\left(i\left(t\right)=j | T\right)} \right) } \right] \\
\nonumber
&\leq \E_{T}\left[\frac{B}{c_{I^{*}}} - T\right]\mu_{I^{*}}
 + \E_{T}\left[\sum_{t=1}^{T}{\sum_{\Delta_j > 0}{\Delta_{j}P\left(i\left(t\right)=j | T\right)}} \right] \\
&\leq \E_{T}\left[\frac{B}{c_{I^{*}}} - T\right]\mu_{I^{*}}
 + \E_{T}\left[\sum_{\Delta_j > 0}{\Delta_{j}\E\left[N_{j}\left(T\right) | T\right]} \right].
\end{flalign} 
\normalsize
Note that since $\Delta_j$ can be smaller than $0$ for some arm $j$, we can further upper bound $R^{B}\left(\algKUBE\right)$ by only summing up $\Delta_{j}\E\left[N_{j}\left(T\right) | T\right]$ over arms with $\Delta_j > 0$ (see the last two inequalities).  
Applying Lemma~\ref{lemma:KUBE_pulls_small_budget_KUBE} to the first term and 
Lemma~\ref{lemma:KUBE_pull_number_j_KUBE} to the second term on the right hand side of Equation~\ref{eq:KUBE_theorem3_1} gives:
\begin{flalign}
\label{eq:KUBE_theorem3_2}
\nonumber
R^{B}\left(\algKUBE\right) &\leq \left[\left(\frac{8}{d^2_{\mathrm{min}}} + \left(\frac{\cmax}{\cmin}\right)^2\right)\sum_{\delta_j > 0}{\frac{\delta_j}{c_{I^{*}}}} \ln{\left(\frac{B}{\cmin}\right)} 
 + \sum_{\delta_j > 0}{\frac{\delta_j}{c_{I^{*}}}}\left( \frac{\pi^2}{3} + 1 \right) + 1\right]\mu_{I^{*}} + \\
\nonumber
&+ \E_{T}\left[\sum_{\Delta_j > 0}{\Delta_j}\left(\left(\frac{8}{d^2_{\mathrm{min}}} + \left( \frac{\cmax}{\cmin}\right)^2 \right)\ln{\left(T\right)} + \frac{\pi^2}{3} + 1\right)\right] \\
\nonumber
&\leq \left(\frac{8}{d^2_{\mathrm{min}}} + \left(\frac{\cmax}{\cmin}\right)^2\right)\sum_{\delta_j > 0}{\frac{\delta_j}{c_{I^{*}}}} \ln{\left(\frac{B}{\cmin}\right)} 
 + \sum_{\delta_j > 0}{\frac{\delta_j}{c_{I^{*}}}}\left( \frac{\pi^2}{3} + 1 \right) + 1 + \\
\nonumber
&+ \sum_{\Delta_j > 0}{\Delta_j}\left(\left(\frac{8}{d^2_{\mathrm{min}}} + \left( \frac{\cmax}{\cmin}\right)^2 \right)\ln{\left(\frac{B}{\cmin}\right)} + \frac{\pi^2}{3} + 1\right) 
\end{flalign}
\normalsize
which concludes the proof.
Note that the last equation is obtained from the facts that $\mu_{I^{*}} \leq 1$ and $T \leq \frac{B}{\cmin}$ with probability $1$.
\end{proof}

In a similar vein, we can show that the regret of fractional \algKUBE\ is bounded as follows:
\begin{theorem}[Main result 2]
\label{theorem:fractional_KUBE_regret_upper_bound}
For any budget size $B > 0$, the performance regret of fractional \algKUBE\ is at most
\small
\begin{eqnarray*}
\label{eq:fractional_KUBE_regret_upper_bound}
\frac{8}{d^2_{\mathrm{min}}}\left(\sum_{\Delta_j > 0}{\Delta_j} 
 + \sum_{\delta_j > 0}{\frac{\delta_j}{c_{I^{*}}}}\right)\ln{\left(\frac{B}{\cmin}\right)}
 + \left(\sum_{\Delta_j > 0}{\Delta_j} 
 + \sum_{\delta_j > 0}{\frac{\delta_j}{c_{I^{*}}}}\right)\left( \frac{\pi^2}{3} + 1 \right)+1\,.
\end{eqnarray*}
\normalsize
\end{theorem}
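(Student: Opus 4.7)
The plan is to mirror the proof of Theorem~\ref{theorem:KUBE_regret_upper_bound_KUBE} while exploiting the key structural difference of fractional \algKUBE: at every step $t > K$ it deterministically plays the arm with the highest estimated density confidence bound, that is $i(t) = \hat{I}(t)$. This eliminates the random--sampling slack that produced the $(\cmax/\cmin)^2$ contribution in Lemma~\ref{lemma:KUBE_lemma6}, since Lemma~\ref{lemma:KUBE_lemma6} now holds trivially with equality and zero error term. Consequently Lemma~\ref{lemma:KUBE_lemma5} is not needed at all.

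First, I would establish a sharper analogue of Lemma~\ref{lemma:KUBE_pull_number_j_KUBE}: for any $j \neq I^{*}$,
$$\E\!\left[N_j(T)\mid T\right] \leq \frac{8}{d^2_{\mathrm{min}}}\ln T + \frac{\pi^2}{3} + 1.$$
The derivation follows the template of the proof of Lemma~\ref{lemma:KUBE_pull_number_j_KUBE} (which is itself Theorem~1 of \citep{AuerEtal2002} transplanted to density upper confidence bounds), but with the substitution $P(i(t)=j) = P(\hat{I}(t)=j)$ already holding exactly. One decomposes this probability using the three failure events of Equations~\ref{eq:KUBE_theorem1_3}--\ref{eq:KUBE_theorem1_5}, bounds the first two by $t^{-4}$ via Chernoff--Hoeffding (Theorem~\ref{theorem:chernoff_hoeffding}), and chooses $l = \lceil 8\ln T / d^2_{\mathrm{min}}\rceil$ so that the third event is impossible. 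Summing over $t, s, s_j$ and using $\sum_{t\geq 1} t^{-2} = \pi^2/6$ yields the bound. The third summand of Equation~\ref{eq:KUBE_theorem1_1}, which introduced the $(\cmax/\cmin)^2\ln T$ term in the KUBE case, is now absent.

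Second, I would prove the fractional analogue of Lemma~\ref{lemma:KUBE_pulls_small_budget_KUBE},
$$\E[T] \geq \frac{B}{c_{I^{*}}} - \frac{8}{d^2_{\mathrm{min}}}\sum_{\delta_j > 0}\frac{\delta_j}{c_{I^{*}}}\ln\!\left(\frac{B}{\cmin}\right) - \sum_{\delta_j > 0}\frac{\delta_j}{c_{I^{*}}}\!\left(\frac{\pi^2}{3}+1\right) - 1,$$
by repeating the budget--exhaustion argument of Lemma~\ref{lemma:KUBE_pulls_small_budget_KUBE}: take expectations in $\sum_{t=1}^{T} c_{i(t)} \leq B - \cmin$, write $c_{i(t)} = c_{I^{*}} + \delta_{i(t)}$, restrict the positive tail to $\delta_j > 0$, and substitute the sharpened $\E[N_j(T)\mid T]$ bound above, using $T \leq B/\cmin$ almost surely to replace $\ln T$ by $\ln(B/\cmin)$.

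Finally, the regret bound follows the proof of Theorem~\ref{theorem:KUBE_regret_upper_bound_KUBE} line for line: upper bound $\E[G(A^{*})]$ by the fractional--relaxation optimum $B\mu_{I^{*}}/c_{I^{*}}$, split the regret into a ``pull--count shortfall'' contribution $(B/c_{I^{*}} - \E[T])\mu_{I^{*}}$ and a ``sub--optimal arm'' contribution $\sum_{\Delta_j > 0}\Delta_j\,\E[N_j(T)\mid T]$, and plug in the two adapted lemmas. Using $\mu_{I^{*}} \leq 1$ collects the terms into the form stated in Theorem~\ref{theorem:fractional_KUBE_regret_upper_bound}, with $8/d^2_{\mathrm{min}}$ replacing $8/d^2_{\mathrm{min}} + (\cmax/\cmin)^2$. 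There is no substantive obstacle: all the analytic work was already carried out in proving Theorem~\ref{theorem:KUBE_regret_upper_bound_KUBE}, and the only essential observation is that deterministic greedy arm selection collapses Lemma~\ref{lemma:KUBE_lemma6} to an equality with no error term, removing the $(\cmax/\cmin)^2\ln(B/\cmin)$ contribution throughout.
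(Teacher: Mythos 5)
Your proposal is correct and follows essentially the same route as the paper's own proof: the paper likewise observes that fractional \algKUBE's deterministic choice of $\hat{I}(t)$ removes the $\left(\cmax/\cmin\right)^2$ correction of Lemma~\ref{lemma:KUBE_lemma6}, re-derives the sharpened bound $\E\left[N_j(T)\mid T\right] \leq \frac{8}{d^2_{\mathrm{min}}}\ln T + \frac{\pi^2}{3} + 1$ via the same Chernoff--Hoeffding template, and then repeats the budget-exhaustion and regret-decomposition arguments of Lemmas~\ref{lemma:KUBE_pulls_small_budget_KUBE} and Theorem~\ref{theorem:KUBE_regret_upper_bound_KUBE}. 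No substantive difference.
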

%
%
\begin{proof}{of Theorem~\ref{theorem:fractional_KUBE_regret_upper_bound}}
We follow the concept that is similar to the proof of Theorem~\ref{theorem:KUBE_regret_upper_bound_KUBE}.
Given this, we only highlight the steps that are different from the previous proofs.  
For the sake of simplicity, we use the notations previously introduced for the performance analysis of \algKUBE. 
In particular, let $T$ denote the random variable that represents the number of pulls that fractional \algKUBE\ uses.
Let $N_{j}\left(T\right)$ denote the number of times that the corresponding pulling algorithm pulls arm $j$ up to time step $T$.
Similar to Lemma~\ref{lemma:KUBE_pull_number_j_KUBE}, we first show that within the fractional \algKUBE\ algorithm, we have:
\begin{equation}
\label{eq:fractional_KUBE_1}
\E\left[N_{j}\left(T\right)| T\right] \leq \frac{8}{d^2_{\mathrm{min}}}\ln{\left(T\right)} + \frac{\pi^2}{3} + 1.
\end{equation}
\normalsize
In so doing, note that
\begin{flalign}
\label{eq:fractional_KUBE_2}
\E\left[N_{j}\left(T\right)| T\right] &= 1 + \sum_{t=K+1}^{T}{P\left(i\left(t\right) = j | T\right)} \leq  l + \sum_{t=K+1}^{T}{P\left(i\left(t\right) = j, N_{j}\left(t\right) \geq l | T\right)} 
\end{flalign}
\normalsize 
for any $l \geq 1$.
Now, using similar techniques from the proof of Lemma~\ref{lemma:KUBE_pull_number_j_KUBE}, we can easily show that
\begin{equation*}
\sum_{t=K+1}^{T}{P\left(i\left(t\right) = j, N_{j}\left(t\right) \geq l | T\right)} 
\leq \sum_{t=1}^{T}{\sum_{s=1}^{t}{\sum_{s_j = 1}^{t}{2t^{-4}} }} \leq \frac{\pi^2}{3},
\end{equation*}
\normalsize
for any $l \geq \left\lceil \frac{8\ln{T}}{d^{2}_{\mathrm{min}}} \right\rceil$.
By substituting this into Equation~\ref{eq:fractional_KUBE_2}, we obtain Equation~\ref{eq:fractional_KUBE_1}.
Next, we show that
\begin{equation}
\label{eq:fractional_KUBE_3}
\E\left[T\right] \geq \frac{B}{c_{I^{*}}} 
 - \frac{8}{d^2_{\mathrm{min}}}\sum_{\delta_j > 0}{\frac{\delta_j}{c_{I^{*}}}} \ln{\left(\frac{B}{\cmin}\right)} 
 - \sum_{\delta_j > 0}{\frac{\delta_j}{c_{I^{*}}}}\left( \frac{\pi^2}{3} + 1 \right) - 1. 
\end{equation}
\normalsize
This can be derived from Equation~\ref{eq:fractional_KUBE_1} by using techniques similar to the proof of Lemma~\ref{lemma:KUBE_pulls_small_budget_KUBE}.
This implies that
\begin{flalign}
\label{eq:fractional_KUBE_4}
\nonumber
R^{B}\left(\algKUBE\right) 
&= \E\left[G^{B}\left(A^{*}\right)\right] - \E\left[G^{B}\left(\algKUBE\right)\right] \\
\nonumber
&\leq \frac{B\mu_{I^{*}}}{c_{I^{*}}} 
 - \E_{T,\{i\left(t\right)\}}\left[ \sum_{t=1}^{T}{\mu_{i\left(t\right)}} \right] \\ 
\nonumber
&\leq \frac{B\mu_{I^{*}}}{c_{I^{*}}} - \E_{T}\left[ \sum_{t=1}^{T}{ \E_{i\left(t\right)}\left[\mu_{i\left(t\right)}\right] } \right] \\
\nonumber
&\leq \E_{T}\left[ \frac{B\mu_{I^{*}}}{c_{I^{*}}}  - \sum_{t=1}^{T}{ \E_{i\left(t\right)}\left[\mu_{i\left(t\right)}\right] } \right] 
\end{flalign}
\begin{flalign}
\nonumber
&\leq \E_{T}\left[ \frac{B\mu_{I^{*}}}{c_{I^{*}}} 
 - \sum_{t=1}^{T}{\sum_{j}^{K}{\mu_{j}P\left(i\left(t\right)=j | T\right)}  } \right] \\
\nonumber
&\leq \E_{T}\left[ \left(\frac{B}{c_{I^{*}}} - T\right)\mu_{I^{*}} + \sum_{t=1}^{T}{\left( \mu_{I^{*}} - \sum_{j}^{K}{\mu_{j}P\left(i\left(t\right)=j | T\right)} \right) } \right] \\
\nonumber
&\leq \E_{T}\left[\frac{B}{c_{I^{*}}} - T\right]\mu_{I^{*}} + \E_{T}\left[\sum_{t=1}^{T}{\sum_{\Delta_j > 0}{\Delta_{j}P\left(i\left(t\right)=j | T\right)}} \right] \\
&\leq \E_{T}\left[\frac{B}{c_{I^{*}}} - T\right]\mu_{I^{*}}
 + \E_{T}\left[\sum_{\Delta_j > 0}{\Delta_{j}\E\left[N_{j}\left(T\right) | T\right]} \right].
\end{flalign} 
\normalsize
By substituting Equations~\ref{eq:fractional_KUBE_2} and~\ref{eq:fractional_KUBE_3} into this, we obtain
\begin{flalign}
\label{eq:fractional_KUBE_5}
\nonumber
R^{B}\left(\algKUBE\right) &\leq \frac{8}{d^2_{\mathrm{min}}}\sum_{\delta_j > 0}{\frac{\delta_j}{c_{I^{*}}}} \ln{\left(\frac{B}{\cmin}\right)} 
 + \sum_{\delta_j > 0}{\frac{\delta_j}{c_{I^{*}}}}\left( \frac{\pi^2}{3} + 1 \right) + 1 + \\
\nonumber
&+ \sum_{\Delta_j > 0}{\Delta_j}\left(\frac{8}{d^2_{\mathrm{min}}}\ln{\left(\frac{B}{\cmin}\right)} + \frac{\pi^2}{3} + 1\right) 
\end{flalign}
\normalsize
which concludes the proof.
\end{proof}

Having established a regret bound for the two algorithms, 
we now move on to show that they produce optimal behaviour, in terms of minimising the regret.
In more detail, we state that:
\begin{theorem}[Main result 3]
\label{theorem:optimal_bound} 
For any arm pulling algorithm, there exists a constant $C \geq 0$, and a particular instance of the budget--limited MAB problem, such that the regret of that algorithm within that particular problem is at least $C\ln{B}$.      
\end{theorem}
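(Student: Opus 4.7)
The plan is to reduce the budget--limited MAB to the classical MAB and then invoke the classical lower bound of \citet{LaiAndRobbins1985}. The key observation is that the budget--limited model strictly generalises the standard MAB: whenever every arm has the same pulling cost, the budget constraint simply caps the number of pulls, and the problem collapses to the standard setting. Since the statement only requires the existence of \emph{some} hard instance, it suffices to exhibit a hard instance within this subclass.

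Concretely, I would consider instances in which $c_i = 1$ for all $i \in \{1,\dots,K\}$. The feasibility constraint in Equation~\ref{eq:budget_limit_def} then becomes $\sum_i N_i^{A}(B) \leq B$, so any algorithm $A$ for the budget--limited MAB on such an instance is, in effect, an algorithm for the standard $K$--armed bandit played over horizon $n = B$. Furthermore, with equal costs we have $I^{*} = \arg\max_i \mu_i$ and $\delta_j = 0$ for all $j$, so the optimal policy $A^{*}$ pulls $I^{*}$ exactly $B$ times, and the regret defined in Equation~\ref{eq:loss_function} reduces to the classical expected cumulative regret $\sum_{j\neq I^{*}} \Delta_j\, \E[N_j^A(B)]$.

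At this point I would appeal to the Lai--Robbins asymptotic lower bound: for every uniformly good policy for the classical MAB with suitably chosen reward distributions (e.g. Bernoulli arms with distinct means), there exists an instance and a constant $C > 0$ (depending on the KL--divergences between the reward distributions of the optimal and suboptimal arms) such that the expected regret after $n$ pulls is at least $C \ln n$. Substituting $n = B$ and lifting the instance back to the budget--limited MAB with unit costs immediately yields an instance on which $R(A) \geq C \ln B$, which is the claimed bound.

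The main technical obstacle is a matter of scope rather than mathematics: the Lai--Robbins bound is usually stated for \emph{uniformly good} (or consistent) policies, whereas the theorem here quantifies over arbitrary algorithms. For algorithms that are not uniformly good, one can exhibit an even easier hard instance where the regret grows linearly (in fact super--logarithmically) in $B$, which trivially dominates $C\ln B$; hence it is enough to split the argument into two cases (uniformly good vs.\ not) and apply Lai--Robbins in the former. The constant $C$ is then absorbed into the statement's $C \geq 0$, and the conclusion follows.
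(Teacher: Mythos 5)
Your proposal is correct and follows essentially the same route as the paper: fix all pulling costs equal so the budget constraint reduces to a deterministic horizon $T = B/c$, then invoke the Lai--Robbins logarithmic lower bound for the standard MAB. Your treatment is in fact slightly more careful than the paper's, since you explicitly handle the fact that Lai--Robbins applies only to uniformly good policies by splitting into cases, a point the paper's one--paragraph proof silently glosses over.
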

\begin{proof}{of Theorem~\ref{theorem:optimal_bound}}
By setting all of the arms' pulling costs equal to $c \geq 0$, 
any standard MAB problem can be reduced to a budget--limited MAB.
This implies that the number of pulls within this MAB is guaranteed to be $\frac{B}{c} = T$ (i.e. $T$ is deterministic).
According to \citep{LaiAndRobbins1985}, 
the best possible regret that an arm pulling algorithm can achieve within the domain of standard MABs is 
$C\ln{\left(T\right)}$. 
Therefore, if there is an algorithm within the domain of budget--limited that provides better regret than 
$C\ln{\left(\frac{B}{c}\right)} = C\ln{T}$, then it also provides better regret bounds for standard MABs.
\end{proof}

The results in Theorem~\ref{theorem:KUBE_regret_upper_bound_KUBE} and~\ref{theorem:fractional_KUBE_regret_upper_bound} can be interpreted to the standard MAB domain as follows.
The standard MAB can be reduced to a budget--limited MAB by setting all the pulling costs to be the same. 
Given this, $\nicefrac{B}{\cmin} = T$ in any sequence of pulls. 
This implies that both \alg\ and fractional \alg\ achieve $O\left(\ln{T}\right)$ regret within the standard MAB domain, which is optimal \citep{LaiAndRobbins1985,AuerEtal2002}.

Note that the regret bound of fractional \alg\ is better (i.e. the constant factor within the regret bound of fractional \alg\ is smaller than that of \alg).
However, this does not indicate that fractional \alg\ has better performance in practice.
One possible reason is that these bounds are not tight. 
In fact, as we will demonstrate in Section~\ref{Section:numerical}, \alg\ typically outperforms its fractional counterpart by up to $40\%$.

\section{Performance Evaluation}
\label{Section:numerical}

\begin{figure}[t]
{\centering
\resizebox*{0.7\textwidth}{!}{
\psfrag{Epsilon-first 05}[cl][cl][1.2][0]{$\varepsilon$--first ($\varepsilon = 0.05$) }
\psfrag{Epsilon-first 10}[cl][cl][1.2][0]{$\varepsilon$--first ($\varepsilon = 0.10$) }
\psfrag{Epsilon-first 15}[cl][cl][1.2][0]{$\varepsilon$--first ($\varepsilon = 0.15$) }
\psfrag{KUBE}[cl][cl][1.0][0]{\alg\ }
\psfrag{fractional KUBE}[cl][cl][1.0][0]{Fractional \alg\ }
\psfrag{lnB}[cl][cl][1.0][0]{$O(B^{\frac{2}{3}}\left(\ln{B}\right)^{-1})$}
\includegraphics{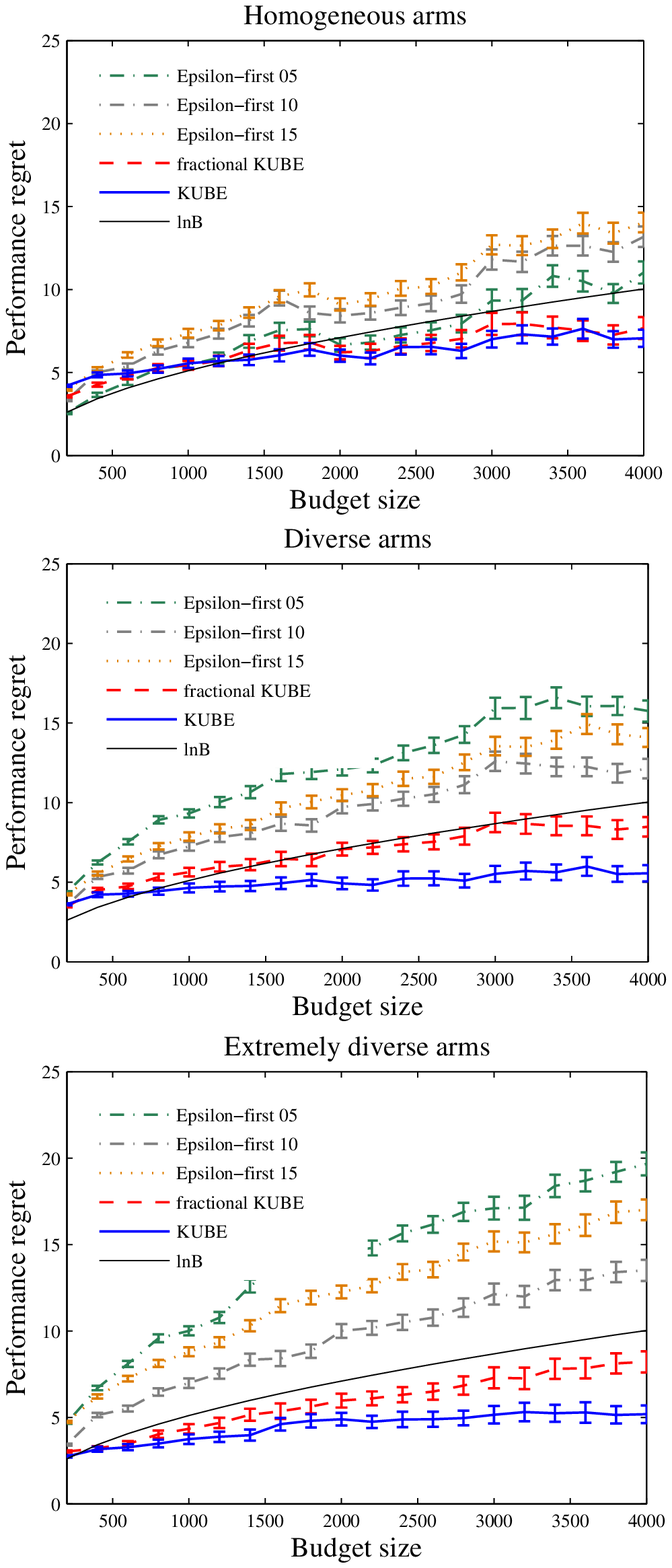}
}
\par}
\caption{\small\textit{Performance regret of the algorithms, divided by $\ln\left(\frac{B}{c_{min}}\right)$, for a $100$--armed bandit machine with homogeneous arms, moderately diverse arms, or extremely diverse arms (left to right).
\normalsize}}
\label{fig:num_results_all}
\end{figure}

\noindent 
In the previous section, we showed that the two algorithms provide asymptotically optimal regret bounds, and that the theoretical regret bound of fractional \alg\ is tighter than that of \alg.
In addition, we also demonstrated that fractional \alg\ outperforms \alg\ in terms of computational complexity.
However, it might be the case that these bounds are not tight, and thus, fractional \alg\ is less practical than \alg\ in real--world applications, as is the case with the standard MAB algorithm, where simple but not optimal methods (e.g. $\varepsilon$--first, or $\varepsilon$--greedy) typically outperform more advanced, theoretically optimal, algorithms (e.g. POKER\citep{VermorelAndMohri2005}, or UCB). 
Given this, we now evaluate the performance of both algorithms through extensive simulations, in order to determine their efficiency in practice.
We also compare the performance of the proposed algorithms against that of different budget--limited $\varepsilon$--first approaches.
In particular, we show that both of our algorithms outperform the budget--limited $\varepsilon$--first algorithms.
In addition, we also demonstrate that \alg\ typically achieves lower regret than its fractional counterpart.

Now, note that if the pulling costs are homogeneous --- that is, the pulling cost of the arms do not significantly differ from each other --- then the performance of the density--ordered greedy algorithm does not significantly differ from that of the fractional relaxation based \citep{KellererEtAl2004}.
Indeed, since the pulling costs are similar, it is easy to show that the density--ordered greedy approach typically stops after one round, and thus, results in similar behaviour to the fractional relaxation based method.
On the other hand, if the pulling costs are more diverse (i.e. the pulling costs of the arms differ from each other), then the performance of the density--ordered greedy algorithm becomes more efficient than that of the fractional relaxation based algorithm.
Given this, in order to compare the performance of \alg\ and its fractional counterpart, we set three test cases, namely: bandits with (i) homogeneous pulling costs; (ii) moderately diverse pulling costs; and (iii) extremely diverse costs.  
In particular, within the homogeneous case, the pulling costs are randomly and independently chosen from the interval $\left[5,10\right]$. 
In addition, the pulling costs are set to be between $\left[1,10\right]$ within the moderately diverse case, and between $\left[1,20\right]$ in the extremely diverse case, respectively. 
The reward distribution of each arm $i$ is set to be a truncated Gaussian, with mean $\mu_{i}$, randomly taken from interval $\left[10,20\right]$, variance $\sigma_{i}^{2} = \frac{\mu_{i}}{2}$, and with supports $[0, 2\mu_{i}]$.
In addition, we set number of arms $K$ to be $100$.
 
Our results are shown in Figure~\ref{fig:num_results_all}. 
These plots show the performance of each algorithm divided by $\ln{\frac{B}{c_{\mathrm{min}}}}$, 
and the error bars represent the $95\%$ confidence intervals.
By doing this, we can see that the performance regret of both algorithms is \small$O\left(\ln{\frac{B}{c_{\mathrm{min}}}}\right)$\normalsize, since in each test case, their performance converges to \small$C\ln{\frac{B}{c_{\mathrm{min}}}}$\normalsize  
(after it is divided by \small$\ln{\frac{B}{c_{\mathrm{min}}}}$\normalsize), where $C$ is some constant factor.
From the numerical results, we can see that both \alg\ and fractional \alg\ differ from the best possible solution by small constant factors (i.e. $C$), since the limit of their convergence is typically low (i.e. it varies between $4$ and $7$ in the test cases), compared to the regret value of the algorithm.
In addition, we can also see that fractional \alg\ algorithm is typically outperformed by \alg.
The reason is that the density--ordered greedy algorithm provides a better approximation than the fractional relaxation based  approach to the underlying unbounded knapsack problem.
This implies that \alg\ converges to the optimal pulling policy faster than its fractional counterpart.
In particular, as expected, the performance of the algorithms are similar to each other in the homogeneous case, where the density--ordered greedy method shows similar behaviour to the fractional relaxation based approach. 
In contrast, \alg\ clearly achieves better performance (i.e. lower regret) within the diverse cases.
Specifically, within the moderately diverse case, \alg\ outperforms its fractional counterpart by up to $40\%$ (i.e. the regret of \alg\ is $40\%$ lower than that of the fractional \alg\ algorithm).
In addition, the performance improvement of \alg\ is typically around $30\%$ in the extremely diverse case.
This implies that, although the current theoretical regret bounds are asymptotically optimal, they are not tight.


Apart from this, we can also observe that both of our algorithms outperform the budget--limited $\varepsilon$--first approaches. 
In particular, \alg\ and its fractional counterpart typically achieves less regret by up to $70\%$ and $50\%$ than the budget--limited $\varepsilon$--first approaches, respectively. 
Note that the performance of the proposed algorithms are typically under the line \small$O(B^{\frac{2}{3}}\left(\ln{B}\right)^{-1})$\normalsize, while the budget--limited $\varepsilon$--first approaches achieve larger regrets.
This implies that our proposed algorithms are the first methods that achieve logarithmic regret bounds.

\section{Conclusions}
\label{Section:conclusion}

\noindent 
In this paper, we introduced two new algorithms, \alg\ and fractional \alg, for the budget--limited MAB problem.
These algorithms sample each arm in an initial phase.
Then, at each subsequent time step, they determine a best set of arms, 
according to the agent's current reward estimates plus a confidence interval based on the number of samples taken of each arm. 
In particular, \alg\ uses the density--ordered greedy algorithm to determine this best set of arms.
In contrast, fractional \alg\ relies on the fractional relaxation based algorithm.
\alg\ and its fractional counterpart then use this best set as a probability distribution with which to randomly choose the next arm to pull. 
As such, both algorithms do not explicitly separate exploration from exploitation.
We have also provided a $O\ln\left(B\right)$ theoretical upper bound for the performance regret of both algorithms, where $B$ is the budget limit.
In addition, we proved that the provided bounds are asymptotically optimal, 
that is, they differ from the best possible regret by only a constant factor.
Finally, through simulation, we have demonstrated that \alg\ typically outperforms its fractional counterpart up to $40\%$, however, with an increased computational cost.
In particular, the average computational complexity of \alg\ per time step is $O\left(K\ln{K}\right)$, while this value is $O\left(K\right)$ for fractional \alg.
 
One of the implications of the numerical results is that although fractional \alg\ has a better bound on its performance regret than \alg, the latter typically ourperforms the former in practice.
Given this, our future work consists of improving the results of Theorems~\ref{theorem:KUBE_regret_upper_bound_KUBE} and~\ref{theorem:fractional_KUBE_regret_upper_bound} to determine tighter upper bounds can be found. 
In addition, we aim to extend the budget--limited MAB model to settings where the reward distributions are dynamically changing, as is the case in a numer of real--world problems.
This, however, is not trivial, since both of our algorithms rely on the assumption that the expected value of the rewards is static, and thus, the estimates converge to their real value.

\bibliographystyle{natbib}
\bibliography{PhD_thesis_bib}

\begin{thebibliography}{15}
\expandafter\ifx\csname natexlab\endcsname\relax\def\natexlab#1{#1}\fi
\expandafter\ifx\csname url\endcsname\relax
  \def\url#1{{\tt #1}}\fi

\bibitem[Antos \emph{et~al}.(2008)Antos, Grover, and Szepesv\'{a}ri]{Antos2008}
A.~Antos, V.~Grover, and Cs. Szepesv\'{a}ri.
\newblock Active learning in multi-armed bandits.
\newblock {\em In Proceedings of the Nineteenth International Conference on
  Algorithmic Learning Theory}, pages 287--302, 2008.

\bibitem[Audibert \emph{et~al}.(2009)Audibert, Munos, and
  Szepesv\'{a}ri]{AudibertEtAl2009}
J-Y. Audibert, R.~Munos, and Cs. Szepesv\'{a}ri.
\newblock Exploration-exploitation trade-off using variance estimates in
  multi-armed bandits.
\newblock {\em Theoretical Computer Science}, 410:\penalty0 1876--1902, 2009.

\bibitem[Auer \emph{et~al}.(2002)Auer, Cesa-Bianchi, and Fischer]{AuerEtal2002}
P.~Auer, N.~Cesa-Bianchi, and P.~Fischer.
\newblock Finite--time analysis of the multiarmed bandit problem.
\newblock {\em Machine Learning}, 47:\penalty0 235--256, 2002.

\bibitem[Bubeck \emph{et~al}.(2009)Bubeck, Munos, and Stoltz]{BubeckEtal2009}
S.~Bubeck, R.~Munos, and G.~Stoltz.
\newblock Pure exploration for multi-armed bandit problems.
\newblock {\em In Proceedings of the Twentieth international conference on
  Algorithmic Learning Theory}, pages 23--37, 2009.

\bibitem[Guha and Munagala(2007)]{GuhaMunagala2007}
S.~Guha and K.~Munagala.
\newblock Approximation algorithms for budgeted learning problems.
\newblock {\em In Proceedings of the Thirty-Ninth Annual ACM symposium on
  Theory of Computing}, pages 104--113, 2007.

\bibitem[Hoeffding(1963)]{Hoeffding1963}
W.~Hoeffding.
\newblock Probability inequalities for sums of bounded random variables.
\newblock {\em ournal of the American Statistical Association}, 58:\penalty0
  13--30, 1963.

\bibitem[Ivic(1985)]{Ivic1985}
A.~Ivic, editor.
\newblock {\em The Riemann Zeta Function}.
\newblock John Wiley \& Sons, 1985.
\newblock ISBN 0-471-80634-X.

\bibitem[Kellerer \emph{et~al}.(2004)Kellerer, Pferschy, and
  Pisinger]{KellererEtAl2004}
H.~Kellerer, U.~Pferschy, and D.~Pisinger.
\newblock {\em Knapsack Problems}.
\newblock Springer, 2004.

\bibitem[Kohli \emph{et~al}.(2004)Kohli, Krishnamurti, and
  Mirchandani]{KohliEtal2004}
R.~Kohli, R.~Krishnamurti, and P.~Mirchandani.
\newblock Average performance of greedy heuristics for the integer knapsack
  problem.
\newblock {\em European Journal of Operational Research}, 154(1):\penalty0
  36--45, 2004.

\bibitem[Lai and Robbins(1985)]{LaiAndRobbins1985}
T.~L. Lai and H.~Robbins.
\newblock Asymptotically efficient adaptive allocation rules.
\newblock {\em Advances in Applied Mathemathics}, 6(1):\penalty0 4--22, 1985.

\bibitem[Padhy \emph{et~al}.(2010)Padhy, Dash, Martinez, and
  Jennings]{PadhyEtAl2010}
P.~Padhy, R.~K. Dash, K.~Martinez, and N.~R. Jennings.
\newblock A utility-based adaptive sensing and multihop communication protocol
  for wireless sensor networks.
\newblock {\em ACM Transactions on Sensor Networks}, 6(3):\penalty0 1--39,
  2010.

\bibitem[Robbins(1952)]{Robbins1952}
H.~Robbins.
\newblock Some aspects of the sequential design of experiments.
\newblock {\em Bulletin of the AMS}, 55:\penalty0 527--535, 1952.

\bibitem[Tran-Thanh \emph{et~al}.(2010)Tran-Thanh, Chapman, de~Cote, Rogers,
  and Jennings]{Tran-ThanhEtAl2010}
L.~Tran-Thanh, A.~Chapman, J.~E.~Munoz de~Cote, A.~Rogers, and N.~R. Jennings.
\newblock Epsilon--first policies for budget--limited multi--armed bandits.
\newblock {\em In Proceedings of the Twenty-Fourth Conference on Artificial
  Intelligence}, pages 1211--1216, 2010.

\bibitem[Tran-Thanh \emph{et~al}.(2011)Tran-Thanh, Rogers, and
  Jennings]{Tran-ThanhEtAl2011}
L.~Tran-Thanh, A.~Rogers, and N.~R. Jennings.
\newblock Long--term information collection with energy harvesting wireless
  sensors: A multi–armed bandit based approach.
\newblock {\em Journal of Autonomous Agents and Multi-Agent Systems}, 2011.

\bibitem[Vermorel and Mohri(2005)]{VermorelAndMohri2005}
J.~Vermorel and M.~Mohri.
\newblock Multi-armed bandit algorithms and empirical evaluation.
\newblock {\em European Conference on Machine Learning}, pages 437--448, 2005.

\end{thebibliography}
\end{document}